\newtheorem{theorem}{Theorem}
\newtheorem{lemma}[theorem]{Lemma}
\newtheorem{fact}[theorem]{Fact}
\newtheorem{corollary}[theorem]{Corollary}
\newtheorem{claim}{Claim}
\newcommand{\qed}{\hfill $\Box$}
\newenvironment{proof}{\par\noindent{\bf Proof.}}{\qed \par\smallskip\noindent}
\newenvironment{proofof}[1]{\par\noindent{\bf Proof of #1.}}{\qed \par\smallskip\noindent}
\newcommand{\field}[1]{\mathbb{#1}}
\newcommand{\E}{\field{E}}
\newcommand{\Ind}[1]{\field{I}{\{#1\}}}
\newcommand{\dt}{\displaystyle}
\newcommand{\uG}{\bar G}
\newcommand{\loss}{\ell}
\newcommand{\hloss}{\widehat{\loss}}
\newcommand{\hp}{\widehat{p}}
\newcommand{\hs}{\widehat{s}}
\newcommand{\hd}{\widehat{d}}
\newcommand{\hP}{\widehat{P}}
\newcommand{\hG}{\widehat{G}}
\newcommand{\hS}{\widehat{S}}
\newcommand{\reach}[1]{\xrightarrow{{#1}}}
\newcommand{\gammab}{\gamma^{(b)}}
\newcommand{\gammabt}{\gamma^{(b_t)}}
\newcommand{\Tb}{T^{(b)}}
\newcommand{\mas}{\mbox{\tt mas}}
\title{\bf From Bandits to Experts: \\ A Tale of Domination and Independence}
\author{
Noga Alon\\
Tel-Aviv University, Israel\\
\texttt{nogaa@tau.ac.il}\\
\and
Nicol\`o Cesa-Bianchi\\
Universit\`a degli Studi di Milano, Italy \\
\texttt{nicolo.cesa-bianchi@unimi.it} \\
\and
Claudio Gentile\\
University of Insubria, Italy\\
\texttt{claudio.gentile@uninsubria.it} \\
\and
Yishay Mansour \\
Tel-Aviv University, Israel\\
\texttt{mansour@tau.ac.il}
}
\begin{document}

\maketitle

\begin{abstract}
We consider the partial observability model for multi-armed bandits, introduced by Mannor and
Shamir~\cite{MS11}. Our main result is a characterization of regret in the
directed observability model in terms of the dominating and
independence numbers of the observability graph. We also show
that in the undirected case, the learner can achieve optimal regret without even accessing
the observability graph before selecting an action. Both results are shown using variants
of the Exp3 algorithm operating on the observability graph in a time-efficient manner.
\end{abstract}

\section{Introduction}
%

Prediction with expert advice ---see, e.g., \cite{LittlestoneWa94,vo90,cb+97,FreundSc95,cbl06}--- is a
general abstract framework for studying sequential prediction
problems, formulated as repeated games between a player and an
adversary. A well studied example of prediction game is the
following: In each round, the adversary privately assigns a loss
value to each action in a fixed set. Then the player chooses an
action (possibly using randomization) and incurs the corresponding
loss. The goal of the player is to control regret, which is defined
as the excess loss incurred by the player as compared to the best
fixed action over a sequence of rounds. Two important variants of
this game have been studied in the past: the expert setting, where
at the end of each round the player observes the loss assigned to
each action for that round, and the bandit setting, where the player
only observes the loss of the chosen action, but not that of other
actions.

Let $K$ be the number of available actions, and $T$ be the number
of prediction rounds. The best possible regret for the expert
setting is of order $\sqrt{T\log K}$. This optimal rate is achieved
by the Hedge algorithm~\cite{FreundSc95} or the Follow the Perturbed
Leader algorithm~\cite{Kalai:05}. In the bandit setting, the optimal
regret is of order $\sqrt{TK}$, achieved by the INF
algorithm~\cite{DBLP:conf/colt/AudibertB09}. A bandit variant of
Hedge, called Exp3~\cite{AuerCeFrSc02}, achieves a regret with a
slightly worse bound of order $\sqrt{TK\log K}$.

Recently, Mannor and Shamir~\cite{MS11} introduced an elegant way
for defining intermediate observability models between the expert
setting (full observability) and the bandit setting (single
observability). An intuitive way of representing an observability
model is through a directed graph over actions: an arc from action
$i$ to action $j$ implies that when playing action $i$ we get
information also about the loss of action $j$. Thus, the expert
setting is obtained by choosing a complete graph over actions
(playing any action reveals all losses), and the bandit setting is
obtained by choosing an empty edge set (playing an action only
reveals the loss of that action).

The main result of~\cite{MS11} concerns undirected observability graphs.
The regret is characterized in terms of the independence number $\alpha$ of the undirected
observability graph.  Specifically, they prove that
$\sqrt{T\alpha\log K}$ is the optimal regret (up to logarithmic
factors) and show that a variant of Exp3, called ELP, achieves this
bound when the graph is known ahead of time, where $\alpha \in \{1,\ldots,K\}$ 
interpolates between full observability ($\alpha=1$ for the clique) and single
observability ($\alpha=K$ for the graph with no edges).
Given the observability graph, ELP runs a linear program to compute
the desired distribution over actions.
In the case when the graph changes over time, and at each time step
ELP observes the current observability graph before prediction,
a bound of $\sqrt{\sum_{t=1}^T\alpha_t\log K}$ is shown, where $\alpha_t$ is
the independence number of the graph at time $t$.
A major problem left open in~\cite{MS11} was the characterization
of regret for directed observability graphs, a setting for which
they only proved partial results.
%

Our main result is a full characterization
(to within logarithmic factors) of regret in the case of directed and
dynamic observability graphs. Our upper bounds are proven using a new algorithm,
called Exp3-DOM. This algorithm is efficient to run even in the dynamic case: it
just needs to compute a small dominating set of the current
observability graph (which must be given as side information) before
prediction.\footnote
{
Computing an approximately minimum dominating set
can be done by running a standard greedy set cover algorithm, see Section \ref{s:prel}.
} 
As in the undirected case, the regret for the directed case is characterized
in terms of the independence numbers of the observability graphs
(computed ignoring edge directions). We arrive at this result by
showing that a key quantity emerging in the analysis of Exp3-DOM can
be bounded in terms of the independence numbers of the graphs. This
bound (Lemma~\ref{l:weightedamlemma} in the appendix) is based on a combinatorial
construction which might be of independent interest.

We also explore the possibility of the learning algorithm receiving
the observability graph only after prediction, and not before.
%
For this setting, we introduce a new variant of Exp3, called Exp3-SET, which achieves
the same regret as ELP for undirected graphs,
but without the need of accessing the current
observability graph before each prediction.
We show that in some random directed graph models Exp3-SET has also
a good performance. In general, we can upper bound the regret of
Exp3-SET as a function of the maximum acyclic subgraph of the
observability graph, but this upper bound may not be tight. 
Yet, Exp3-SET is
much simpler and computationally less demanding than ELP, which
needs to solve a linear program in each round.

There are a variety of real-world settings where partial observability
models corresponding to directed and undirected graphs are applicable. One of them is route selection. 
We are given a graph of possible routes connecting cities: when we select a route $r$ connecting
two cities, we observe the cost (say, driving time or fuel consumption) of the ``edges" 
along that route and, in addition, we have complete
information on any sub-route $r'$ of $r$, but not vice versa. We
abstract this in our model by having an observability graph over routes $r$, and
an arc from $r$ to any of its sub-routes $r'$.

Sequential prediction problems with partial observability 
models also arise in the context of recommendation systems. For
example, an online retailer, which advertises products to users,
knows that users buying certain products are often interested in a
set of related products. This knowledge can be represented as a
graph over the set of products, where two products are joined by an
edge if and only if users who buy any one of the two are likely to
buy the other as well. In certain cases, however, edges have a
preferred orientation. For instance, a person buying a video game
console might also buy a high-def cable to connect it to the TV set.
Vice versa, interest in high-def cables need not
indicate an interest in game consoles.

Such observability models may also arise in the case when a recommendation
system operates in a network of users. For example, consider the 
problem of recommending a sequence of products, or contents, to users 
in a group. Suppose the recommendation system is hosted on an online social 
network, on which users can befriend each other. In this case, it has been
observed that social relationships reveal similarities in tastes and
interests~\cite{said2010social}. However, social links can also be
asymmetric (e.g., followers of celebrities). In such cases,
followers might be more likely to shape their preferences after the
person they follow, than the other way around. Hence, a product liked
by a celebrity is probably also liked by his/her followers, whereas
a preference expressed by a follower is more often specific to that
person.

\section{Learning protocol, notation, and preliminaries}\label{s:prel}
As stated in the introduction, we consider an adversarial multi-armed bandit setting with a finite action set $V = \{1,\dots,K\}$. 
At each time $t=1,2,\dots$, a player (the ``learning algorithm'') picks some action $I_t \in V$ and incurs 
a bounded loss $\loss_{I_t,t} \in [0,1]$. Unlike the standard adversarial bandit 
problem~\cite{AuerCeFrSc02,cbl06}, 
where only the played action $I_t$ reveals its loss $\loss_{I_t,t}$, here we assume 
all the losses in a subset $S_{I_t,t} \subseteq V$ of actions are revealed after $I_t$ is played. 
More formally, the player observes the pairs $(i,\loss_{i,t})$ for each $i \in S_{I_t,t}$. 
We also assume $i\in S_{i,t}$ for any $i$ and $t$, that is, any action reveals its own loss 
when played. Note that the bandit setting ($S_{i,t} = \{i\}$) and the expert 
setting ($S_{i,t} = V$) 
are both special cases of this framework. 
We call $S_{i,t}$ the {\em observation set} of action $i$ at time $t$, and write $i \reach{t} j$
when at time $t$ playing action $i$ also reveals the loss of action $j$. Hence, 
$S_{i,t} = \{j\in V\,:\, i \reach{t} j\}$.
The family of observation sets $\{S_{i,t}\}_{i\in V}$ we collectively call the
{\em observation system} at time $t$.

The adversaries we consider are nonoblivious. Namely, each loss $\loss_{i,t}$ at time $t$ can be an arbitrary 
function of the past player's actions $I_1,\dots,I_{t-1}$. The performance of a player $A$
is measured through the regret
\[
    \max_{k\in V} \E\bigl[L_{A,T} - L_{k,T}\bigl]~,
\]
where $L_{A,T} = \loss_{I_1,1} + \cdots + \loss_{I_T,T}$ and $L_{k,T} = \loss_{k,1} + \cdots + \loss_{k,T}$
are the cumulative losses of the player and of action $k$, respectively. The expectation is taken with 
respect to the player's internal randomization (since losses are allowed to depend on the player's past 
random actions, also $L_{k,t}$ may be random).\footnote
{ 
Although we defined the problem in terms of losses, our 
analysis can be applied to the case when actions return rewards $g_{i,t} \in [0,1]$ via the 
transformation $\loss_{i,t} = 1 - g_{i,t}$.
}
The observation system $\{S_{i,t}\}_{i\in V}$ is either adversarially generated (in which case, 
each $S_{i,t}$ can be an arbitrary function of past player's actions, just like losses are), or
randomly generated ---see Section \ref{s:symm}.
In this respect, we distinguish between {\em adversarial} and {\em random} observation systems.

Moreover, whereas some algorithms need to know the observation system at the beginning 
of each step $t$, others need not. From this viewpoint, we shall consider two online learning
settings. In the first setting, called the {\em informed} setting, the whole observation system 
$\{S_{i,t}\}_{i\in V}$ selected by the adversary is made available to the learner
{\em before} making its choice $I_t$. This is essentially the ``side-information" framework first 
considered in \cite{MS11} 
In the second setting, called the {\em uninformed setting}, no information whatsoever
regarding the time-$t$ observation system is given to the learner prior to prediction.

We find it convenient to adopt the same graph-theoretic interpretation of observation systems
as in \cite{MS11}. At each time step $t=1,2,\dots$, the observation system $\{S_{i,t}\}_{i\in V}$ 
defines a directed graph $G_t = (V,D_t)$, 
where $V$ is the set of actions, and $D_t$ is the set of arcs, i.e., ordered pairs of nodes. 
For $j \neq i$, arc $(i,j) \in D_t$ if and only if $i \reach{t} j$ (the self-loops created by 
$i \reach{t} i$ are intentionally ignored). Hence, we can equivalently define $\{S_{i,t}\}_{i\in V}$
in terms of $G_t$. Observe that the outdegree $d_i^+$ of any $i \in V$ equals $|S_{i,t}|-1$. 
Similarly, the indegree $d_i^-$ of $i$ is the number of action $j \neq i$ such that $i \in S_{j,t}$
(i.e., such that $j \reach{t}i$).
A notable special case of the above is when the observation system is symmetric over time: 
$j \in S_{i,t}$ if and only if $i \in S_{j,t}$ for all $i,j$ and $t$. 
In words, playing $i$ at time $t$ reveals the loss of $j$ if and only if playing $j$ at time $t$ 
reveals the loss of $i$. A symmetric observation system is equivalent to $G_t$ being an undirected graph or,
more precisely, to a directed graph having, for every pair of nodes $i,j \in V$, either no arcs or length-two directed cycles.
Thus, from the point of view of the symmetry of the observation system, we also 
distinguish between the {\em directed} case ($G_t$ is a general directed graph) and the {\em symmetric} case ($G_t$ is an undirected graph for all $t$). For instance, combining the terminology introduced
so far, the adversarial, informed, and directed
setting is when $G_t$ is an adversarially-generated directed graph disclosed to the algorithm in round
$t$ before prediction, while the random, uninformed, and directed setting is when $G_t$ is a randomly generated
directed graph which is not given to the algorithm before prediction.

The analysis of our algorithms depends on certain properties of the sequence of graphs $G_t$.
Two graph-theoretic notions playing an important role here are those of {\em independent sets} 
and {\em dominating sets}. 
Given an undirected graph $G = (V,E)$, an independent set of $G$ is any subset 
$T \subseteq V$ such that no two $i,j \in T$ are connected by an edge in $E$. 
An independent set is {\em maximal} if no proper superset thereof 
is itself an independent set. The size of a largest (maximal) independent set 
is the {\em independence number} of $G$, denoted by $\alpha(G)$.
If $G$ is directed, we can still associate with it an independence number: we simply view $G$
as undirected by ignoring arc orientation.
If $G = (V,D)$ is a directed graph, then a subset $R \subseteq V$ is a 
dominating set for $G$ if for all $j \not\in R$ there exists some $i \in R$ such 
that arc $(i,j) \in D$. In our bandit setting, a time-$t$ dominating set $R_t$ is a subset of 
actions with the property that the loss of any remaining action in round $t$ can be observed by 
playing some action in $R_t$. A dominating set is {\em minimal} if no proper subset thereof 
is itself a dominating set.
The 
domination number of directed graph $G$, denoted by
$\gamma(G)$, is the size of a smallest (minimal) dominating set of $G$. 

Computing a minimum dominating set for an arbitrary directed graph $G_t$ is equivalent to 
solving a minimum set cover problem on the associated observation system $\{S_{i,t}\}_{i\in V}$. 
Although minimum set cover is NP-hard, the well-known Greedy Set Cover algorithm~\cite{Chv79}, 
which repeatedly selects from $\{S_{i,t}\}_{i\in V}$ the set containing the largest number of 
uncovered elements so far, computes a dominating set $R_t$ such that $|R_t| \leq \gamma(G_t)\,(1+\ln K)$.


Finally, we can also lift the independence number of an undirected graph to directed graphs through
the notion of {\em maximum acyclic subgraphs}: Given a directed graph $G = (V,D)$, an acyclic subgraph
of $G$ is any graph $G' = (V',D')$ such that $V'\subseteq V$, and $D'= D\cap \bigl(V'\times V'\bigr)$, with no (directed) cycles.
We denote by $\mas(G)= |V'|$ the maximum size of such $V'$.
Note that when $G$ is undirected (more precisely, as above, when $G$ is a directed graph having for every pair of nodes 
$i,j \in V$ either no arcs or length-two cycles), then $\mas(G) = \alpha(G)$, otherwise $\mas(G) \geq \alpha(G)$. 
In particular, when $G$ is itself a directed acyclic graph, then $\mas(G) = |V|$.

\section{Algorithms without Explicit Exploration: The Uninformed Setting}\label{s:symm}
In this section, we show that a simple variant of the Exp3 algorithm~\cite{AuerCeFrSc02} 
obtains optimal regret (to within logarithmic factors) in two variants of the uninformed setting: 
(1) adversarial and symmetric, (2) random and directed. We then show that even the harder 
adversarial and directed setting lends itself to an analysis, though with a weaker regret bound. 

%
\begin{algorithm2e}[t]
\SetKwSty{textrm} \SetKwFor{For}{{\bf For}}{}{}
\SetKwIF{If}{ElseIf}{Else}{if}{}{else if}{else}{}
\SetKwFor{While}{while}{}{}
\textbf{Parameter:} $\eta \in [0,1]$;\\
\textbf{Initialize:} $w_{i,1} = 1$ for all $i \in V = \{1,\ldots,K\}$;\\
\For{$t=1,2,\dots$:}
{ {
\begin{enumerate}
\item Observation system $\{S_{i,t}\}_{i\in V}$ is generated (but not disclosed)~;
\item Set ${\dt p_{i,t} = \frac{w_{i,t}}{W_{i,t}}}$ for each $i\in V$, where ${\dt W_t = \sum_{j \in V} w_{j,t}}$~;
\item Play action $I_t$ drawn according to distribution $p_t = (p_{1,t},\dots,p_{K,t})$~;
\item Observe pairs $(i,\loss_{i,t})$ for all $i \in S_{I_t,t}$;
\item Observation system $\{S_{i,t}\}_{i\in V}$ is disclosed~;
%
\item For any $i \in V$ set $w_{i,t+1} = w_{i,t}\,\exp\bigl(-\eta\,\hloss_{i,t}\bigr)$, where
\[
    \hloss_{i,t}
=
    \frac{\loss_{i,t}}{q_{i,t}}\,\Ind{i \in S_{I_t,t}}
\qquad\text{and}\qquad
    q_{i,t} = \sum_{j \,:\, j \reach{t} i} p_{j,t}~.
\]
\end{enumerate}
\vspace{-0.2in}
} }
\caption{Exp3-SET: Algorithm for the uninformed setting}
\label{a:lossalg}
\end{algorithm2e}
%
Exp3-SET (Algorithm~\ref{a:lossalg}) runs Exp3 without mixing with the uniform distribution. Similar to Exp3, Exp3-SET uses loss estimates $\hloss_{i,t}$ that divide each observed loss
$\loss_{i,t}$ by the probability $q_{i,t}$ of observing it. This probability $q_{i,t}$ is simply the sum of
all $p_{j,t}$ such that $j \reach{t} i$ (the sum includes $p_{i,t}$). Next, we bound the regret of Exp3-SET
in terms of the key quantity
\begin{equation}\label{e:Qt}
    Q_t = \sum_{i \in V} \frac{p_{i,t}}{q_{i,t}} = \sum_{i \in V} \frac{p_{i,t}}{\sum_{j \,:\, j \reach{t} i} p_{j,t}}~.
\end{equation}
Each term $p_{i,t}/q_{i,t}$ can be viewed as the probability of drawing $i$ from $p_t$ conditioned on the event that $i$ was observed.
Similar to~\cite{MS11}, a key aspect to our analysis is the ability to deterministically (and nonvacuously)\footnote
{
An obvious upper bound on $Q_t$ is $K$.
} 
upper bound $Q_t$ in terms 
of certain quantities defined on $\{S_{i,t}\}_{i\in V}$. 
We shall do so in two ways, either irrespective of how small each $p_{i,t}$ may be (this section) 
or depending on suitable lower bounds on the probabilities $p_{i,t}$ (Section~\ref{s:directed}).
In fact, forcing lower bounds on $p_{i,t}$ is equivalent to adding exploration terms to the algorithm, 
which can be done only when knowing $\{S_{i,t}\}_{i\in V}$ before each prediction ---an information 
available only in the informed setting.


The following simple result is the building block for all subsequent results in the uninformed setting.\footnote
{
All proofs are given in the appendix.
}
\begin{theorem}\label{thm:noexp}
In the adversarial case, the regret of Exp3-SET satisfies
\[
    \max_{k \in V} \E\bigl[L_{A,T} - L_{k,T}\bigr]
\le
    \frac{\ln K}{\eta} + \frac{\eta}{2}\,\sum_{t=1}^T \E[Q_t]~.
\]
\end{theorem}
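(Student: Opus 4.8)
The plan is to run the standard exponential-weights potential argument on the total weight $W_t = \sum_{j\in V} w_{j,t}$. First I would track the one-step ratio
\[
    \frac{W_{t+1}}{W_t} = \sum_{i\in V} p_{i,t}\,\exp\bigl(-\eta\,\hloss_{i,t}\bigr),
\]
which follows directly from the update rule and the definition $p_{i,t} = w_{i,t}/W_t$. Since the estimates satisfy $\hloss_{i,t} \ge 0$, I would apply the elementary inequality $e^{-x} \le 1 - x + x^2/2$ (valid for all $x\ge 0$) to each term, and then $\ln(1+z)\le z$, obtaining
\[
    \ln\frac{W_{t+1}}{W_t} \le -\eta\sum_{i\in V} p_{i,t}\hloss_{i,t} + \frac{\eta^2}{2}\sum_{i\in V} p_{i,t}\hloss_{i,t}^2.
\]
Summing over $t=1,\dots,T$ telescopes the left-hand side to $\ln(W_{T+1}/W_1)$. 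Lower-bounding $W_{T+1} \ge w_{k,T+1} = \exp\bigl(-\eta\sum_t \hloss_{k,t}\bigr)$ for any fixed $k$, and using $W_1 = K$, yields after rearranging
\[
    \sum_{t=1}^T\sum_{i\in V} p_{i,t}\hloss_{i,t} - \sum_{t=1}^T\hloss_{k,t} \le \frac{\ln K}{\eta} + \frac{\eta}{2}\sum_{t=1}^T\sum_{i\in V}p_{i,t}\hloss_{i,t}^2.
\]

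The second step is to take expectations and identify each term. Conditioning on the history $\mathcal{F}_{t-1}$ of past plays $I_1,\dots,I_{t-1}$, both the losses $\loss_{i,t}$ and the observation system (hence each $q_{i,t}$ and $p_{i,t}$) are fixed, the only randomness being $I_t\sim p_t$. The key identity is that the event $\{i\in S_{I_t,t}\}$ coincides with $\{I_t\reach{t} i\}$, so $\E\bigl[\Ind{i\in S_{I_t,t}}\mid\mathcal{F}_{t-1}\bigr] = \sum_{j:j\reach{t}i} p_{j,t} = q_{i,t}$. Because $i\reach{t} i$ always holds, $q_{i,t}\ge p_{i,t}>0$ and the estimator is well defined, giving the unbiasedness $\E[\hloss_{i,t}\mid\mathcal{F}_{t-1}] = \loss_{i,t}$. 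Consequently, summing over $t$ and taking full expectations, $\E\bigl[\sum_{t}\sum_{i} p_{i,t}\hloss_{i,t}\bigr] = \E[L_{A,T}]$ since $\sum_i p_{i,t}\loss_{i,t} = \E[\loss_{I_t,t}\mid\mathcal{F}_{t-1}]$, while $\E\bigl[\sum_{t}\hloss_{k,t}\bigr] = \E[L_{k,T}]$.

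It remains to bound the quadratic term, which is where $Q_t$ enters. Using $\Ind{i\in S_{I_t,t}}^2 = \Ind{i\in S_{I_t,t}}$ I would compute $\E[\hloss_{i,t}^2\mid\mathcal{F}_{t-1}] = \loss_{i,t}^2/q_{i,t}\le 1/q_{i,t}$ (as $\loss_{i,t}\in[0,1]$), so that
\[
    \E\Bigl[\sum_{i\in V}p_{i,t}\hloss_{i,t}^2 \;\Big|\; \mathcal{F}_{t-1}\Bigr] \le \sum_{i\in V}\frac{p_{i,t}}{q_{i,t}} = Q_t.
\]
Since $Q_t$ is itself $\mathcal{F}_{t-1}$-measurable, taking full expectations and substituting back gives $\E[L_{A,T}]-\E[L_{k,T}] \le \frac{\ln K}{\eta} + \frac{\eta}{2}\sum_t \E[Q_t]$, and the claim follows by maximizing over $k$. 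There is no serious obstacle here beyond careful bookkeeping against the nonoblivious adversary: one must condition on $\mathcal{F}_{t-1}$ so that the losses and the graph are treated as deterministic functions of the past, legitimizing the conditional-expectation identities, and one must observe that the self-loop $i\reach{t} i$ guarantees $q_{i,t}>0$, ensuring the importance-weighted estimates never blow up so that the linearization $e^{-x}\le 1-x+x^2/2$ (which needs only $x\ge0$, not $x$ small) applies termwise.
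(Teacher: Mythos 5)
Your proof is correct and follows essentially the same route as the paper's: the standard exponential-weights potential argument with $e^{-x}\le 1-x+x^2/2$, the comparison against $w_{k,T+1}$, and the conditional-expectation computations yielding unbiasedness and the bound $\E_t[\hloss_{i,t}^2]\le 1/q_{i,t}$ that produces $Q_t$. The only cosmetic difference is your explicit attention to $\mathcal{F}_{t-1}$-measurability against the nonoblivious adversary, which the paper handles implicitly via the $\E_t[\cdot]$ shorthand.
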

As we said, in the adversarial and symmetric case the observation system at time $t$ can be described by an
undirected graph $G_t = (V,E_t)$. This is essentially the problem of \cite{MS11}, which they studied in the easier
informed setting, where the same quantity $Q_t$ above arises in the analysis of their ELP algorithm. In their
Lemma~3, they show that $Q_t \le \alpha(G_t)$, irrespective of the choice of the probabilities $p_t$.
When applied to Exp3-SET, this immediately gives the following result.
\begin{corollary}\label{thm:symmetric}
In the adversarial and symmetric case, the regret of Exp3-SET satisfies
\[
    \max_{k \in V} \E\bigl[L_{A,T} - L_{k,T}\bigr]
\le
    \frac{\ln K}{\eta} + \frac{\eta}{2}\,\sum_{t=1}^T \E[\alpha(G_t)]~.
\]
In particular, if for constants $\alpha_1, \ldots, \alpha_T$
we have $\alpha(G_t) \leq \alpha_t$, $t = 1, \ldots, T$, then
setting $\eta = \sqrt{(2\ln K)\big/\sum_{t=1}^T \alpha_t}$, gives
\[
    \max_{k \in V} \E\bigl[L_{A,T} - L_{k,T}\bigr]
\le
    \sqrt{2(\ln K)\sum_{t=1}^T \alpha_t}~.
\]
\end{corollary}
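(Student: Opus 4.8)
The plan is to prove Theorem~\ref{thm:noexp} first, since Corollary~\ref{thm:symmetric} then follows almost immediately from the stated bound $Q_t \le \alpha(G_t)$ of \cite{MS11} together with a routine optimization of $\eta$. For the theorem, I would run the standard exponential-weights potential argument on the weights $w_{i,t}$, adapted to the estimated losses $\hloss_{i,t}$. The natural potential is $W_{t} = \sum_{i\in V} w_{i,t}$, and I would track how $\ln(W_{t+1}/W_t)$ evolves across rounds.

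First I would compute $\frac{W_{t+1}}{W_t} = \sum_{i\in V} p_{i,t}\,e^{-\eta\,\hloss_{i,t}}$, using $p_{i,t} = w_{i,t}/W_t$. Applying the elementary inequality $e^{-x} \le 1 - x + \tfrac{x^2}{2}$ valid for $x \ge 0$ (here $\hloss_{i,t}\ge 0$), I get
\[
    \frac{W_{t+1}}{W_t}
\le
    1 - \eta\sum_{i} p_{i,t}\hloss_{i,t} + \frac{\eta^2}{2}\sum_{i} p_{i,t}\hloss_{i,t}^2~.
\]
Then $\ln(1+z)\le z$ turns the sum over $t$ of $\ln(W_{t+1}/W_t)$ into a telescoping bound, yielding $\ln\frac{W_{T+1}}{W_1} \le -\eta\sum_t\sum_i p_{i,t}\hloss_{i,t} + \frac{\eta^2}{2}\sum_t\sum_i p_{i,t}\hloss_{i,t}^2$. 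Lower-bounding $\ln\frac{W_{T+1}}{W_1}\ge \ln\frac{w_{k,T+1}}{W_1} = -\eta\sum_t \hloss_{k,t} - \ln K$ for any fixed comparator $k$ and rearranging gives, after dividing by $\eta$,
\[
    \sum_t\sum_i p_{i,t}\hloss_{i,t} - \sum_t\hloss_{k,t}
\le
    \frac{\ln K}{\eta} + \frac{\eta}{2}\sum_t\sum_i p_{i,t}\hloss_{i,t}^2~.
\]

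The crux is then the expectation step, where the unbiasedness of the estimators and the identity defining $Q_t$ come in. Conditioning on the past (so that $p_{i,t}$ and the observation system are determined), the event $i\in S_{I_t,t}$ has probability exactly $q_{i,t}=\sum_{j:\,j\reach{t}i}p_{j,t}$, whence $\E[\hloss_{i,t}] = \loss_{i,t}$; this makes $\sum_i p_{i,t}\hloss_{i,t}$ unbiased for $\sum_i p_{i,t}\loss_{i,t}=\E[\loss_{I_t,t}]$ and $\hloss_{k,t}$ unbiased for $\loss_{k,t}$, so the left side becomes the regret. For the quadratic term, $\hloss_{i,t}^2 = \frac{\loss_{i,t}^2}{q_{i,t}^2}\Ind{i\in S_{I_t,t}}$, so $\E[p_{i,t}\hloss_{i,t}^2] = p_{i,t}\frac{\loss_{i,t}^2}{q_{i,t}}\le \frac{p_{i,t}}{q_{i,t}}$ using $\loss_{i,t}\le 1$; summing over $i$ produces exactly $\E[Q_t]$. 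The main subtlety to handle carefully is the conditioning and the nonoblivious adversary: since losses and the observation system may depend on past actions, I would make the martingale/tower-property argument explicit, taking conditional expectations round by round before summing, and noting that $p_{i,t}$, $q_{i,t}$, and $\loss_{i,t}$ are all measurable with respect to the history up to time $t$ so they factor out of the time-$t$ conditional expectation. Assembling these pieces yields the claimed bound, and the corollary follows by substituting $Q_t\le\alpha(G_t)$ and plugging in $\eta=\sqrt{2\ln K/\sum_t\alpha_t}$ to balance the two terms.
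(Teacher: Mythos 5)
Your proposal is correct and follows essentially the same route as the paper: the identical exponential-weights potential argument (with $e^{-x}\le 1-x+x^2/2$ and the comparator lower bound) establishes Theorem~\ref{thm:noexp}, the conditional-expectation step turns the quadratic term into $\E[Q_t]$ exactly as in the appendix, and the corollary then follows, as in the paper, by invoking the bound $Q_t\le\alpha(G_t)$ from Lemma~3 of \cite{MS11} and balancing the two terms with the stated choice of $\eta$. No gaps.
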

As shown in~\cite{MS11}, the knowledge of $\sum_{t=1}^T\alpha(G_t)$ for tuning $\eta$ can be dispensed
with (at the cost of extra log factors in the bound) by binning the values of $\eta$ and running
Exp3 on top of a pool of instances of Exp-SET, one for each bin.
The bounds proven in Corollary~\ref{thm:symmetric} are equivalent to those proven
in \cite{MS11} (Theorem 2 therein) for the ELP algorithm. Yet, our analysis is much simpler
and, more importantly, our algorithm is simpler and more efficient than ELP, which
requires solving a linear program
at each step. Moreover, unlike ELP, Exp-SET does not require prior knowledge of the observation
system $\{S_{i,t}\}_{i\in V}$ at the beginning of each step.


We now turn to the directed setting. We first treat the random case, and then the harder adversarial case.

The Erd\H{o}s-Renyi model is a standard model for random directed graphs $G = (V,D)$, where
we are given a density parameter $r \in [0,1]$ and, for any pair $i,j \in V$,
arc $(i,j)\in D$ with independent probability
$r$.\footnote
{
Self loops, i.e., arcs $(i,i)$ are included by default here.
} 
We have the following result.
\begin{corollary}\label{thm:random_er}
Let $G_t$ be generated according to the Erd\H{o}s-Renyi model with parameter $r \in [0,1]$.
Then the regret of Exp3-SET satisfies
\[
    \max_{k \in V} \E\bigl[L_{A,T} - L_{k,T}\bigr]
\le
    \frac{\ln K}{\eta} + \frac{\eta\,T}{2r}\left(1 - (1-r)^{K}\right)~.
\]
In the above, the expectations $\E[\cdot]$ are w.r.t.\ both the
algorithm's randomization and the random generation of $G_t$
occurring at each round. In particular, setting $\eta =
\sqrt{\frac{2r\ln K}{T\left(1 - (1-r)^{K}\right)}}$, gives
\[
    \max_{k \in V} \E\bigl[L_{A,T} - L_{k,T}\bigr]
\le
    \sqrt{\frac{2(\ln K)T\left(1 - (1-r)^{K}\right)}{r}}~.
\]
\end{corollary}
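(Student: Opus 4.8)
The plan is to start from Theorem~\ref{thm:noexp}, which already reduces the regret to $\frac{\ln K}{\eta} + \frac{\eta}{2}\sum_{t=1}^T \E[Q_t]$, and then to evaluate $\E[Q_t]$ exactly under the Erd\H{o}s-Renyi randomization of $G_t$. The crucial observation is that, at the start of round $t$, the probability vector $p_t = (p_{1,t},\dots,p_{K,t})$ is a deterministic function of the history prior to round $t$ (the weights $w_{i,t}$ depend only on $G_1,\dots,G_{t-1}$, past actions, and past losses), whereas $G_t$ is drawn fresh and independently at step~1. Hence, conditioning on $p_t$, the arc indicators $\Ind{j \reach{t} i}$ for $j \neq i$ are independent $\mathrm{Bernoulli}(r)$ variables, while the self-loops $i \reach{t} i$ are always present. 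So I would first write $\E[Q_t] = \E\bigl[\E[Q_t \mid p_t]\bigr]$ and focus on the inner conditional expectation with $p_t$ held fixed.

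To evaluate $\E[Q_t \mid p_t]$, I would apply the integral identity $\frac{1}{q} = \int_0^1 x^{q-1}\,dx$ to $q = q_{i,t}$. Writing $q_{i,t} = p_{i,t} + \sum_{j \neq i} p_{j,t}\,\Ind{j \reach{t} i}$ and using the independence of the arc indicators, the expectation factorizes as
\[
\E\bigl[x^{q_{i,t}} \mid p_t\bigr] = x^{p_{i,t}} \prod_{j \neq i} f_j(x),\qquad f_j(x) \defeq 1 - r + r\,x^{p_{j,t}}.
\]
After interchanging expectation and integral (justified by nonnegativity of the integrand), this yields $\E[Q_t \mid p_t] = \sum_i p_{i,t}\int_0^1 x^{p_{i,t}-1}\prod_{j \neq i} f_j(x)\,dx$.

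The key step, which I expect to be the real content of the argument, is recognizing an exact-integration (telescoping) structure. Since $f_i'(x) = r\,p_{i,t}\,x^{p_{i,t}-1}$, each summand equals $\frac{1}{r}\,\frac{f_i'(x)}{f_i(x)}\prod_j f_j(x)$, so summing over $i$ turns the integrand into $\frac{1}{r}\bigl(\prod_j f_j(x)\bigr)'$ by the product rule for differentiation. The integral then collapses by the fundamental theorem of calculus to $\frac{1}{r}\bigl[\prod_j f_j(1) - \prod_j f_j(0)\bigr]$. Because the Exp3-SET weights remain strictly positive we have $p_{j,t} > 0$, hence $f_j(1) = 1$ and $f_j(0) = 1-r$, giving the exact identity $\E[Q_t \mid p_t] = \frac{1 - (1-r)^K}{r}$, remarkably independent of $p_t$.

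Finally, I would assemble the pieces: since $\E[Q_t] = \frac{1-(1-r)^K}{r}$ for every $t$, summing over $t$ and substituting into Theorem~\ref{thm:noexp} produces the first displayed bound. The second bound follows from the standard minimization of $\frac{\ln K}{\eta} + c\,\eta$ over $\eta$, with $c = \frac{T(1-(1-r)^K)}{2r}$, whose minimizer is $\eta = \sqrt{2r\ln K \big/ \bigl(T(1-(1-r)^K)\bigr)}$ and whose optimal value is $\sqrt{2(\ln K)T(1-(1-r)^K)/r}$. The only mild care needed is the justification of the expectation/integral interchange and the independence of $G_t$ from $p_t$; the integration-by-telescoping identity is the heart of the argument.
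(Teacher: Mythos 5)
Your proposal is correct, and it reaches the paper's bound by a genuinely different route. The paper computes $\E[Q_t]$ term by term via a symmetrization argument: Claim~\ref{cl:1} shows that averaging $\frac{p_{f(i)}}{p_{f(i)}+\sum_{j:f(j)\in N^-_{f(i)}}p_{f(j)}}$ over a uniformly random relabeling $f$ yields exactly $\frac{1}{1+d_i^-}$ regardless of the $p_j$'s, and Claim~\ref{cl:2} then combines this with the i.i.d.\ Bernoulli arc indicators to reduce each term to $\E\bigl[\frac{1}{1+\mathrm{Bin}(K-1,r)}\bigr]=\frac{1}{rK}\bigl(1-(1-r)^K\bigr)$, summed over the $K$ nodes. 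You instead use the integral representation $\frac{1}{q}=\int_0^1 x^{q-1}\,dx$, factor the conditional expectation over the independent arcs into $\prod_{j\neq i}\bigl(1-r+r\,x^{p_{j,t}}\bigr)$, and observe that summing over $i$ produces $\frac{1}{r}\frac{d}{dx}\prod_j f_j(x)$, so the integral telescopes to $\frac{1}{r}\bigl(1-(1-r)^K\bigr)$ exactly. Both arguments are exact computations yielding the same $p_t$-independent value of $\E[Q_t\mid p_t]$; yours is more analytic and makes the independence from $p_t$ transparent in one stroke, and it also handles the conditioning on the history (hence the measurability of $p_t$ with respect to the past and the freshness of $G_t$) somewhat more explicitly than the paper does. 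The paper's combinatorial route has the side benefit that Claim~\ref{cl:1} is a reusable standalone tool (it is invoked again in the discussion of $\mas(G)$ after Corollary~\ref{c:lower}), which your telescoping identity does not directly provide. The only points requiring mild care in your write-up --- integrability of $p_i x^{p_i-1}$ near $x=0$ when $p_i<1$, the Tonelli interchange, strict positivity of the Exp3-SET weights so that $f_j(0)=1-r$ --- are all benign and you flag them; the final tuning of $\eta$ is the standard optimization and matches the stated constants.
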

Note that as $r$ ranges in $[0,1]$ we interpolate between the bandit ($r=0$)\footnote
{
Observe that $\lim_{r\rightarrow 0^+} \frac{1 - (1-r)^{K}}{r} = K$.
} 
and the expert ($r=1$) regret bounds.

In the adversarial setting, we have the following result.
%
%
%
\begin{corollary}\label{c:ndag}
In the adversarial and directed case, the regret of Exp3-SET satisfies
\[
    \max_{k \in V} \E\bigl[L_{A,T} - L_{k,T}\bigr]
\le
    \frac{\ln K}{\eta} + \frac{\eta}{2}\,\sum_{t=1}^T \E[\mas(G_t)]~.
\]
In particular, if for constants $m_1, \ldots, m_T$ we have
$\mas(G_t) \leq m_t$, $t = 1, \ldots, T$, then setting $\eta =
\sqrt{(2\ln K)\big/\sum_{t=1}^T m_t}$, gives
\[
    \max_{k \in V} \E\bigl[L_{A,T} - L_{k,T}\bigr]
\le
    \sqrt{2(\ln K)\sum_{t=1}^T m_t}~.
\]
\end{corollary}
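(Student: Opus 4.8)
The plan is to combine the regret bound already established in Theorem~\ref{thm:noexp} with a deterministic graph-theoretic bound on the key quantity $Q_t$. Theorem~\ref{thm:noexp} gives regret at most $\frac{\ln K}{\eta} + \frac{\eta}{2}\sum_{t=1}^T \E[Q_t]$, so it suffices to prove the \emph{pointwise} inequality $Q_t \le \mas(G_t)$ for every round $t$ and every realization of $p_t$ and $G_t$. Taking expectations then yields $\E[Q_t]\le\E[\mas(G_t)]$ and hence the first displayed bound, while the second follows from the routine choice $\eta = \sqrt{(2\ln K)\big/\sum_t m_t}$ that balances the two terms, exactly as in Corollary~\ref{thm:symmetric}.

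Thus the whole content lies in the claim that, for any probability vector $p_t$ (indeed for any nonnegative weights, since $Q_t$ is invariant under rescaling of $p_t$),
\[
    Q_t = \sum_{i\in V}\frac{p_{i,t}}{q_{i,t}} = \sum_{i\in V}\frac{p_{i,t}}{\sum_{j\,:\,j\reach{t}i}p_{j,t}} \le \mas(G_t)~.
\]
Because every $p_{i,t}>0$ and $i\reach{t}i$, each denominator is strictly positive and there are no degenerate terms. I would prove the claim probabilistically. Draw independent random variables $X_i$, one per action, with $X_i$ exponentially distributed of rate $p_{i,t}$. For independent exponentials the probability that a given one is the smallest equals its rate divided by the total rate of the competing set; applying this to the in-neighborhood $N_i = \{j : j\reach{t}i\}$ of $i$ (which contains $i$ thanks to the self-loop) gives $\Pr[X_i < X_j \text{ for all } j\in N_i,\ j\ne i] = p_{i,t}/q_{i,t}$. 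Setting $A = \{i : X_i < X_j \text{ for all } j\in N_i,\ j\ne i\}$, linearity of expectation yields $\E\,|A| = \sum_i p_{i,t}/q_{i,t} = Q_t$.

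It then remains to show that $A$ always induces an acyclic subgraph of $G_t$, which is the crux of the argument. The set $A$ picks exactly those actions whose clock strictly beats all their in-neighbors. Hence along any arc $i\reach{t}j$ with $i,j\in A$ and $i\ne j$, the vertex $i$ is an in-neighbor of $j$, so the defining condition for $j\in A$ forces $X_j < X_i$; that is, $X$ strictly decreases along every arc internal to $A$. A directed cycle inside $A$ would then produce a strictly decreasing chain of $X$-values returning to its start, a contradiction. Therefore $G_t[A]$ is acyclic with probability one, so $|A| \le \mas(G_t)$ always, and consequently $Q_t = \E\,|A| \le \mas(G_t)$. (Equivalently, the clocks induce a weighted random ordering of $V$ and $A$ is the set of actions preceding all their in-neighbors; the exponential distribution is precisely what makes $\Pr[i\in A]$ come out to $p_{i,t}/q_{i,t}$.)

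With the pointwise bound $Q_t\le\mas(G_t)$ in hand, substituting into Theorem~\ref{thm:noexp} and tuning $\eta$ completes the proof. The only genuine obstacle is isolating the acyclicity of the random set $A$; once the exponential-clock interpretation of each ratio $p_{i,t}/q_{i,t}$ is in place, the rest is bookkeeping and the standard $\eta$-optimization.
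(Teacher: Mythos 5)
Your proposal is correct, and the reduction to Theorem~\ref{thm:noexp} plus the tuning of $\eta$ match the paper exactly; the difference lies entirely in how you establish the key inequality $Q_t \le \mas(G_t)$, which the paper isolates as Lemma~\ref{lemma:nDGA}. The paper proves that lemma deterministically, by greedy peeling: it repeatedly selects the vertex $i$ minimizing $p_i + \sum_{j\in N_i^-} p_j$, deletes it together with its entire in-neighborhood, shows that the deleted group contributes at most $1$ to the sum, and observes that the retained representatives induce an acyclic subgraph (all arcs into each retained vertex having been removed at its selection step). Your argument is probabilistic: independent exponential clocks $X_i$ of rate $p_{i,t}$ give $\Pr[i\in A]=p_{i,t}/q_{i,t}$ for the set $A$ of vertices beating all their in-neighbors, $X$ strictly decreases along every arc internal to $A$ so $A$ induces an acyclic subgraph almost surely, and hence $Q_t=\E|A|\le\mas(G_t)$. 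This is a weighted, directed analogue of the Caro--Wei random-ordering proof and is sound (the positivity of the $p_{i,t}$ in Exp3-SET rules out degenerate rates, and ties occur with probability zero). What each route buys: the paper's peeling argument is elementary and constructive, exhibiting an explicit acyclic set of size at least $Q_t$, and its template recurs in the paper's Lemmas~\ref{l:amlemma} and~\ref{l:greedycover}; your exponential-clock argument is shorter, handles arbitrary weights without any minimization or induction, and makes the appearance of the ratios $p_{i,t}/q_{i,t}$ conceptually transparent. Both yield exactly the same bound, so the corollary follows identically either way.
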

Observe that Corollary \ref{c:ndag} is a strict generalization of 
Corollary \ref{thm:symmetric} because, as we pointed out in Section~\ref{s:prel},
$\mas(G_t) \geq \alpha(G_t)$, with equality holding when $G_t$ is an undirected graph.

As far as lower bounds are concerned, in the symmetric setting, the authors of \cite{MS11} derive a lower bound of
$\Omega\bigl(\sqrt{\alpha(G)T}\bigr)$ in the case when $G_t=G$ for all $t$.
We remark that similar to the symmetric setting, we can derive a lower
bound of $\Omega\bigl(\sqrt{\alpha(G)T}\bigr)$. The simple observation is that given
a directed graph $G$, we can define a new graph $G'$ which is made undirected 
just by reciprocating arcs; namely, if there is an arc $(i,j)$ in $G$ we add 
arcs $(i,j)$ and $(j,i)$ in $G'$. Note that $\alpha(G)=\alpha(G')$.
Since in $G'$ the learner can only receive more information than in
$G$, any lower bound on $G$ also applies to $G'$. Therefore we
derive the following corollary to the lower bound of~\cite{MS11} (Theorem~4 therein).
\begin{corollary}\label{c:lower}
Fix a directed graph $G$, and suppose $G_t = G$ for all $t$. Then
there exists a (randomized) adversarial strategy such that for any 
$T =\Omega\bigl(\alpha(G)^3\bigr)$ and for any learning strategy, the expected regret
of the learner is $\Omega\bigl(\sqrt{\alpha(G)T}\bigr)$.
\end{corollary}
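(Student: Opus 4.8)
The plan is to reduce the directed lower bound to the known symmetric lower bound from~\cite{MS11} by the symmetrization argument already sketched in the text preceding the corollary, and then to verify that the reduction preserves both the independence number and the asymptotic regret.

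First I would recall the setup of the cited lower bound: for a \emph{fixed undirected} graph $H$ with $H_t = H$ for all $t$, Theorem~4 of~\cite{MS11} exhibits a randomized adversary forcing expected regret $\Omega\bigl(\sqrt{\alpha(H)T}\bigr)$ against any learner, provided $T$ is large enough relative to $\alpha(H)$ (this is where the condition $T = \Omega(\alpha(H)^3)$ enters). I would therefore start from the given directed graph $G$, construct its symmetrized version $G'$ by reciprocating every arc (adding $(j,i)$ whenever $(i,j)\in G$), and note two facts: (i) $G'$ is undirected, so the cited symmetric lower bound applies to it; and (ii) $\alpha(G') = \alpha(G)$, since the independence number of a directed graph is by definition computed on its underlying undirected graph, and symmetrizing does not change which pairs of vertices are adjacent (it only ensures both orientations are present). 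Thus $T = \Omega(\alpha(G)^3) = \Omega(\alpha(G')^3)$ is exactly the regime in which the $\Omega\bigl(\sqrt{\alpha(G')T}\bigr)$ bound holds for $G'$.

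Next I would argue that the adversary's hard instance on $G'$ can be transplanted back onto $G$ without making the learner's task any easier. The key monotonicity observation is the informational one: in $G'$ the observation set of each action $i$ is a \emph{superset} of its observation set in $G$ (reciprocation only adds arcs, hence adds revealed losses), so any learner operating on $G$ sees strictly less feedback than it would on $G'$ under the same loss sequence. Consequently, for the \emph{same} randomized adversarial loss assignment (whose distribution in the cited construction depends only on the vertex set and the chosen maximal independent set, not on edge orientations), the expected regret incurred on $G$ is at least the expected regret incurred on $G'$. Therefore the $\Omega\bigl(\sqrt{\alpha(G')T}\bigr) = \Omega\bigl(\sqrt{\alpha(G)T}\bigr)$ lower bound for $G'$ immediately yields the same bound for $G$.

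The main obstacle I anticipate is making the ``less feedback cannot help'' step fully rigorous against a \emph{nonoblivious} adversary: since losses may depend on the learner's past actions, one cannot simply couple the two feedback processes on a fixed loss realization. The cleanest fix is to use the fact that the cited hard instance is in essence an \emph{oblivious} (stochastic) construction --- the loss distributions are fixed in advance --- so that the coupling over loss sequences is valid, and a learner on $G$ can be simulated by a learner on $G'$ that simply ignores the extra revealed coordinates; any regret guarantee the latter could violate would contradict the lower bound for $G'$. I would phrase the argument as: since every algorithm for the directed graph $G$ induces an algorithm for $G'$ with identical behavior (discarding surplus observations), and the $G'$ adversary forces $\Omega\bigl(\sqrt{\alpha(G')T}\bigr)$ regret against \emph{all} such algorithms, the same adversarial loss strategy, played on $G$, forces $\Omega\bigl(\sqrt{\alpha(G)T}\bigr)$ regret, which is the claim.
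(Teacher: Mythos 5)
Your proposal matches the paper's own argument: the paper derives this corollary in exactly the same way, by reciprocating the arcs of $G$ to obtain an undirected $G'$ with $\alpha(G')=\alpha(G)$, invoking Theorem~4 of~\cite{MS11} on $G'$, and transferring the bound to $G$ via the observation that the learner on $G$ receives only a subset of the feedback available on $G'$. Your treatment is in fact slightly more careful than the paper's one-paragraph justification --- in particular, you state the monotonicity implication in the correct direction (a lower bound for the more informative $G'$ transfers to the less informative $G$) and you address the simulation argument explicitly --- so no gap remains.
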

One may wonder whether a sharper lower bound argument exists which applies to the general
directed setting and involves the larger quantity $\mas(G)$.
Unfortunately, the above measure does not seem to be related to the optimal
regret: Using Claim~\ref{cl:1} in the appendix (see proof of Theorem \ref{thm:random_er}) 
one can exhibit a sequence of graphs each having a large acyclic subgraph, on which the 
regret of Exp3-SET is still small.


The lack of a lower bound matching the upper bound provided by Corollary \ref{c:ndag}
is a good indication that something more sophisticated has to be done in order to upper 
bound $Q_t$ in (\ref{e:Qt}). This leads us to consider more refined ways of allocating 
probabilities $p_{i,t}$ to nodes. However, this allocation will require prior knowledge 
of the graphs $G_t$.

\section{Algorithms with Explicit Exploration: The Informed Setting}\label{s:directed}
We are still in the general scenario where graphs $G_t$ are arbitrary and directed, but now
$G_t$ is made available before prediction. 
%
%
We start by showing a simple example where our analysis of Exp3-SET inherently fails.
This is due to the fact that, when the graph induced by the observation system is directed,
the key quantity $Q_t$ defined in (\ref{e:Qt}) cannot be nonvacuously upper bounded
independent of the choice of probabilities $p_{i,t}$.
A way round it is to introduce a new algorithm, called Exp3-DOM, which controls probabilities $p_{i,t}$
by adding an exploration term to the distribution $p_t$. This exploration term is supported on a
dominating set of the current graph $G_t$. For this reason, Exp3-DOM requires prior access to a
dominating set $R_t$ at each time step $t$ which, in turn, requires prior knowledge of the entire 
observation system $\{S_{i,t}\}_{i\in V}$. 

As announced, the next result shows that, even for simple directed graphs, there exist distributions 
$p_t$ on the vertices such that $Q_t$ is linear in the number of nodes while the independence number 
is $1$.\footnote
{
In this specific example, the maximum acyclic subgraph has size $K$, which confirms the looseness of 
Corollary~\ref{c:ndag}.  
} 
Hence, nontrivial bounds on $Q_t$ can be found only by imposing conditions on distribution $p_t$.
\begin{fact}\label{l:bad}
Let $G = (V,D)$ be a total order on $V = \{1,\dots,K\}$, i.e., such
that for all $i \in V$, arc $(j,i) \in D$ for all $j = i+1,\dots,K$.
Let $p = (p_1, \ldots, p_K)$ be a distribution on $V$ such that
$p_i=2^{-i}$, for $i<K$ and $p_k=2^{-K+1}$. Then
\[
Q = \sum_{i = 1}^K \frac{p_i}{p_i+\sum_{j\,:\,j\reach{}i} p_j} =
\sum_{i = 1}^K \frac{p_i}{\sum_{j=i}^K p_j} = \frac{K+1}{2}~.
\]
\end{fact}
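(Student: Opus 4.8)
The plan is to evaluate the sum term by term, exploiting the rigid structure of the total order together with the geometric shape of the distribution. First I would pin down the reachability relation: since $(j,i)\in D$ exactly when $j>i$, and every node reveals its own loss, the set of $j$ with $j\reach{}i$ is precisely $\{i,i+1,\dots,K\}$. Hence the denominator of the $i$-th term is $q_i = \sum_{j\,:\,j\reach{}i} p_j = \sum_{j=i}^K p_j$, the tail sum of the distribution starting at index $i$ (this simplification is already recorded in the statement).

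The crux is to evaluate this tail sum, and I would split into two regimes. For $i<K$, I would compute $\sum_{j=i}^{K-1} 2^{-j}$ as a finite geometric series, obtaining $2^{-(i-1)} - 2^{-(K-1)}$, and then add the top term $p_K = 2^{-(K-1)}$. The key point is that $p_K$ is set to $2^{-K+1}$ rather than the ``natural'' $2^{-K}$, and this extra factor is exactly what cancels the leftover $-2^{-(K-1)}$, leaving the clean identity $q_i = 2^{-(i-1)} = 2p_i$. Consequently $p_i/q_i = 1/2$ for every $i<K$. For the top index, $q_K = p_K$, so $p_K/q_K = 1$.

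Finally I would sum the contributions: the $K-1$ terms with $i<K$ each equal $1/2$, plus the single term $1$ at $i=K$, giving $\frac{K-1}{2} + 1 = \frac{K+1}{2}$, as claimed. The only step needing any care is the geometric-sum identity together with the observation that the doubled last probability is precisely what makes the tail sum telescope to exactly $2p_i$; everything else is bookkeeping. As a preliminary sanity check I would verify that $p$ is a genuine distribution, i.e.\ that $\sum_{i=1}^{K-1} 2^{-i} + 2^{-K+1} = 1$, which follows from the same telescoping and reassures that the ``extra'' weight on $p_K$ has been accounted for correctly.
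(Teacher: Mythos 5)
Your proposal is correct and matches the paper's own argument: the paper likewise observes that the tail sum $\sum_{j=i}^K p_j$ equals $2^{-i+1}=2p_i$ for $i<K$ (so each such term contributes $1/2$) and equals $p_K$ for $i=K$ (contributing $1$), giving $\frac{K-1}{2}+1=\frac{K+1}{2}$. Your additional remarks on the role of the doubled last weight and the sanity check that $p$ sums to $1$ are fine but not needed.
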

We are now ready to introduce and analyze the new algorithm Exp3-DOM for the adversarial, informed
and directed setting. Exp3-DOM (see Algorithm \ref{a:exp3dom}) runs $\mathcal{O}(\log K)$ variants of Exp3 
indexed by $b = 0,1,\dots,\lfloor \log_2 K\rfloor$. At time $t$ the algorithm is given observation system 
$\{S_{i,t}\}_{i\in V}$,
and computes a dominating set $R_t$ of the directed graph $G_t$ induced by $\{S_{i,t}\}_{i\in V}$.
Based on the size $|R_t|$ of $R_t$, the algorithm uses instance $b_t = \lfloor \log_2|R_t|\rfloor$
to pick action $I_t$. We use a superscript $b$ to denote the quantities relevant to the variant of Exp3 indexed by $b$. Similarly to the analysis of Exp3-SET, the key quantities are
\[
    q^{(b)}_{i,t} = \sum_{j \,:\, i \in S_{j,t}} p^{(b)}_{j,t} = \sum_{j \,:\, j \reach{t} i} p^{(b)}_{j,t}
\qquad\text{and}\qquad
    Q^{(b)}_t = \sum_{i \in V} \frac{p^{(b)}_{i,t}}{q^{(b)}_{i,t}}~,\qquad b = 0, 1, \ldots, \lfloor \log_2 K\rfloor~.
\]
Let $\Tb = \bigl\{ t=1,\dots,T \,:\, |R_t| \in [2^b,2^{b+1}-1] \bigr\}$. Clearly, the sets $\Tb$ are a
partition of the time steps $\{1,\dots,T\}$, so that $\sum_b |\Tb| = T$. Since the adversary adaptively chooses the dominating sets $R_t$, the sets $\Tb$ are random.
This causes a problem in tuning the parameters $\gammab$.
For this reason, we do not prove a regret bound for Exp3-DOM, where each instance uses a fixed $\gammab$,
but for a slight variant (described in the proof of Theorem~\ref{thm:alg} ---see the appendix)
where each $\gammab$ is set through a doubling trick.
%
\begin{algorithm2e}[t]
\SetKwSty{textrm} \SetKwFor{For}{{\bf For}}{}{}
\SetKwIF{If}{ElseIf}{Else}{if}{}{else if}{else}{}
\SetKwFor{While}{while}{}{}
{\bf Input:} Exploration parameters $\gammab \in (0,1]$ for $b \in \bigl\{0,1,\ldots, \lfloor \log_2 K\rfloor\bigr\}$;\\
{\bf Initialization:} $w^{(b)}_{i,1} = 1$ for all $i \in V$ and $b \in \bigl\{0,1,\ldots, \lfloor \log_2 K\rfloor\bigr\}$;\\
\For{$t=1,2,\dots$ :}
{ {
\begin{enumerate}
\item Observation system $\{S_{i,t}\}_{i\in V}$ is generated {\em and disclosed}~;
\item Compute a dominating set $R_t\subseteq V$ for $G_t$ associated with $\{S_{i,t}\}_{i\in V}$~;
\item Let $b_t$ be such that $|R_t| \in \bigl[2^{b_t},2^{b_t+1}-1\bigr]$;
\item Set $W^{(b_t)}_t = \sum_{i \in V} w^{(b_t)}_{i,t};$
\item Set ${\dt p^{(b_t)}_{i,t} = \bigl(1-\gammabt\bigr) \frac{w^{(b_t)}_{i,t}}{W^{(b_t)}_{t}} + \frac{\gammabt}{|R_t|} \Ind{i\in R_t} }$;
\item Play action $I_t$ drawn according to distribution $p^{(b_t)}_t = \bigl(p^{(b_t)}_{1,t},\dots,p^{(b_t)}_{V,t}\bigr)$~;
\item Observe pairs $(i,\loss_{i,t})$ for all $i \in S_{I_t,t}$;
\item For any $i \in V$ set $w^{(b_t)}_{i,t+1} = w^{(b_t)}_{i,t}\,\exp\bigl(-\gammabt\,\hloss^{(b_t)}_{i,t}/2^{b_t}\bigr)$,
where
\[
    \hloss^{(b_t)}_{i,t}
=
    \frac{\loss_{i,t}}{q^{(b_t)}_{i,t}}\,\Ind{i \in S_{I_t,t}}
\qquad\text{and}\qquad
    q^{(b_t)}_{i,t} = \sum_{j \,:\, j \reach{t} i} p^{(b_t)}_{j,t}~.
\]
\end{enumerate}
} }
\vspace{-0.2in}
\caption{Exp3-DOM}
\label{a:exp3dom}
\end{algorithm2e}
%
\begin{theorem}\label{thm:alg}
In the adversarial and directed case, the regret of Exp3-DOM satisfies
\begin{equation}
\label{eq:gammabfixed}
    \max_{k\in V} \E\bigl[L_{A,T} - L_{k,T}\bigr]
\le
    \sum_{b=0}^{\lfloor\log_2 K\rfloor} \left( \frac{2^b\ln K}{\gammab}
    +  \gammab\E\left[\sum_{t \in \Tb} \left(1 + \frac{Q^{(b)}_t}{2^{b+1}}\right)\right]\right)~.
\end{equation}
Moreover, if we use a doubling trick to choose $\gammab$ for each $b = 0,\dots,\lfloor \log_2 K\rfloor$, then
\begin{equation}
\label{eq:doublingtrick}
  \max_{k\in V} \E\bigl[L_{A,T} - L_{k,T}\bigr]
=
{\mathcal O}\left((\ln K)\,\E\left[\sqrt{\sum_{t=1}^T \left(4|R_t|
+ Q_t^{(b_t)}\right)}\right] + (\ln K) \ln(KT)\right)~.
\end{equation}
\end{theorem}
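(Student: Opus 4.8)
The plan is to analyze each Exp3 instance $b$ separately over its ``active'' time steps $\Tb$, then sum the resulting per-instance regret bounds. First I would establish the standard exponential-weights potential argument for a single instance $b$. Fix $b$ and consider only the rounds $t\in\Tb$ on which instance $b$ is used. Following the usual Exp3 analysis, track the potential $\ln\bigl(W^{(b)}_{t+1}/W^{(b)}_t\bigr)$, use the inequality $e^{-x}\le 1-x+x^2/2$ valid for $x\ge 0$ together with the fact that the effective learning rate here is $\gammab/2^b$ and the losses are rescaled by $1/2^b$. Summing the telescoping potential over $t\in\Tb$ and comparing against a fixed action $k$ yields, after taking expectations, a bound of the form $\frac{2^b\ln K}{\gammab} + \frac{\gammab}{2\cdot 2^b}\sum_{t\in\Tb}\E\bigl[\sum_i p^{(b)}_{i,t}\,\E_t[(\hloss^{(b)}_{i,t})^2]\bigr]$ for the regret accrued on $\Tb$.

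The key step is to control the second-moment term. Here I would use the defining property of the importance-weighted estimator: conditioned on the past, $\E_t[\hloss^{(b)}_{i,t}] = \loss_{i,t}$ and, since $\hloss^{(b)}_{i,t}=\frac{\loss_{i,t}}{q^{(b)}_{i,t}}\Ind{i\in S_{I_t,t}}$ with $\Pr_t(i\in S_{I_t,t})=q^{(b)}_{i,t}$, one gets $\E_t\bigl[(\hloss^{(b)}_{i,t})^2\bigr]\le \frac{1}{q^{(b)}_{i,t}}$ because $\loss_{i,t}\in[0,1]$. Therefore $\sum_i p^{(b)}_{i,t}\,\E_t[(\hloss^{(b)}_{i,t})^2]\le \sum_i \frac{p^{(b)}_{i,t}}{q^{(b)}_{i,t}} = Q^{(b)}_t$, which is exactly the quantity appearing in the theorem. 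A small additional subtlety is that the regret is measured against the \emph{true} cumulative loss $L_{k,T}$, while the weights update on estimated losses; the exploration mixing $\frac{\gammabt}{|R_t|}\Ind{i\in R_t}$ contributes a bias that must be bounded. Since the exploration probability injected on the dominating set is at most $\gammab$ per round, this bias costs at most $\gammab$ per round in $\Tb$, producing the additive ``$1$'' inside the parenthesis in \eqref{eq:gammabfixed}. Combining the potential bound, the second-moment estimate $Q^{(b)}_t$, and the exploration bias, and noting $2^b/2^{b+1}=1/2$, gives the per-instance contribution $\frac{2^b\ln K}{\gammab} + \gammab\,\E\bigl[\sum_{t\in\Tb}\bigl(1+\frac{Q^{(b)}_t}{2^{b+1}}\bigr)\bigr]$; summing over the $\lfloor\log_2 K\rfloor+1$ values of $b$ yields \eqref{eq:gammabfixed}.

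For the second bound \eqref{eq:doublingtrick}, the issue is that the optimal tuning of $\gammab$ depends on $\sum_{t\in\Tb}\bigl(4|R_t|+Q^{(b)}_t\bigr)$, which is unknown in advance and, because the adversary chooses $R_t$ adaptively, the sets $\Tb$ are themselves random. I would apply a doubling trick on each instance $b$ independently, restarting instance $b$ with a halved $\gammab$ whenever the running sum of its per-round cost $4|R_t|+Q^{(b)}_t$ crosses the next power of two. The standard doubling-trick analysis converts the bound $\frac{2^b\ln K}{\gammab}+\gammab(\cdots)$, optimized over $\gammab$, into $\mathcal{O}\bigl(\sqrt{2^b(\ln K)\sum_{t\in\Tb}(4|R_t|+Q^{(b)}_t)}\bigr)$ per instance, plus an additive $\mathcal{O}(\ln(KT))$ overhead from the number of doubling restarts (at most logarithmically many, since each quantity is bounded by $\mathrm{poly}(K,T)$). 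Using $|R_t|\in[2^{b_t},2^{b_t+1})$ so that $2^{b_t}\le |R_t|$, I can replace the per-instance factor $2^b$ by $4|R_t|$ inside the sum and fold everything into a single sum over all $t=1,\dots,T$. The final step is a Cauchy--Schwarz / concavity argument (moving the sum over $b$ inside a single square root via $\sum_b\sqrt{a_b}\le\sqrt{(\text{number of }b)\sum_b a_b}$, absorbing the $\sqrt{\log K}$ factor into the stated $\ln K$ prefactor) together with Jensen's inequality to pull the expectation outside the square root, giving the claimed form $\mathcal{O}\bigl((\ln K)\,\E\bigl[\sqrt{\sum_{t=1}^T(4|R_t|+Q^{(b_t)}_t)}\bigr]+(\ln K)\ln(KT)\bigr)$.

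The main obstacle I expect is the second part: handling the random, adversarially-determined partition $\{\Tb\}$ cleanly. The per-instance Exp3 analysis is routine, but making the doubling trick rigorous when the number of active rounds in each instance is not known ahead of time, and then correctly aggregating the $\mathcal{O}(\log K)$ instances into a single expected square-root bound without losing more than a $\sqrt{\log K}$ factor, is where the careful bookkeeping lies.
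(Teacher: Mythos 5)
Your proposal is correct and follows essentially the same route as the paper's proof: a per-instance exponential-weights potential argument with effective rate $\gammab/2^b$, the second-moment bound $\E_t[(\hloss^{(b)}_{i,t})^2]\le 1/q^{(b)}_{i,t}$ yielding $Q^{(b)}_t$, the exploration term on the dominating set contributing the additive $1$, and then a per-instance doubling trick on the observable quantity $\sum_{t\in\Tb}(1+Q^{(b)}_t/2^{b+1})$ followed by $2^{b_t}\le|R_t|$, Cauchy--Schwarz over the $\mathcal{O}(\log K)$ instances, and Jensen to obtain \eqref{eq:doublingtrick}. The only cosmetic difference is that the paper's restart schedule uses $\gammab_r=\sqrt{(2^b\ln K)/2^r}$ (so $\gammab$ shrinks by $\sqrt{2}$ per restart, not by half), which does not affect the argument.
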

Importantly, the next result shows how bound~(\ref{eq:doublingtrick}) of Theorem~\ref{thm:alg} can be
expressed in terms of the sequence $\alpha(G_t)$ of independence numbers of graphs $G_t$ whenever
the Greedy Set Cover algorithm~\cite{Chv79} (see Section \ref{s:prel}) is used to compute the dominating set $R_t$
of the observation system at time $t$.
\begin{corollary}\label{c:final}
If Step~2 of Exp3-DOM uses the Greedy Set Cover algorithm to compute the dominating sets $R_t$,
then the regret of Exp-DOM with doubling trick satisfies
\[
   \max_{k\in V} \E\bigl[L_{A,T} - L_{k,T}\bigr]
=
    \mathcal{O}\left(\ln(K)\sqrt{\ln(KT)\sum_{t=1}^T \alpha(G_t)} + \ln(K)\ln(KT) \right)~,
\]
where, for each $t$, $\alpha(G_t)$ is the independence number of the graph $G_t$ induced by observation system
$\{S_{i,t}\}_{i\in V}$.
\end{corollary}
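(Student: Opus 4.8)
The plan is to start from bound~(\ref{eq:doublingtrick}) of Theorem~\ref{thm:alg} and reduce the corollary to a single per-realization (deterministic) estimate, namely
\[
\sum_{t=1}^T \left(4|R_t| + Q_t^{(b_t)}\right) = \mathcal{O}\left(\ln(KT)\sum_{t=1}^T \alpha(G_t)\right)~.
\]
Since this would hold pointwise, I can substitute it inside the expectation and the square root in~(\ref{eq:doublingtrick}), pull the $\sqrt{\ln(KT)}$ factor out, and multiply by the outer $\ln K$, carrying the additive $\ln(K)\ln(KT)$ term over unchanged. (If one insists on keeping the independence numbers inside an expectation, Jensen's inequality $\E[\sqrt{X}] \le \sqrt{\E[X]}$ finishes the job; I treat this last manipulation as routine, since $G_t$ and hence $\alpha(G_t),|R_t|,Q_t^{(b_t)}$ are random under the nonoblivious adversary.) So the entire content is to bound each of $|R_t|$ and $Q_t^{(b_t)}$ by $\mathcal{O}(\alpha(G_t)$ times a single logarithm$)$.

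For $Q_t^{(b_t)}$, the role of the explicit exploration in Exp3-DOM is precisely to force a lower bound on probabilities: by construction $p_{i,t}^{(b_t)} \ge \gammabt/|R_t|$ for every $i \in R_t$. Because $R_t$ is a dominating set, every $i \in V$ is reached by some $j \in R_t$ (or lies in $R_t$ itself), so the denominator $q_{i,t}^{(b_t)} = \sum_{j\,:\,j\reach{t}i} p_{j,t}^{(b_t)}$ is at least $\gammabt/|R_t|$ for all $i \in V$. This is exactly the uniform lower bound needed to invoke Lemma~\ref{l:weightedamlemma}, which turns a lower bound $c$ on the denominators into an estimate $Q_t^{(b_t)} = \mathcal{O}\bigl(\alpha(G_t)\ln(K/c)\bigr)$ with $c = \gammabt/|R_t|$. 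It then remains to check that $\ln(K/c) = \mathcal{O}(\ln(KT))$: since $c \ge \gammabt/K$ and the doubling trick used to tune $\gammab$ (described in the proof of Theorem~\ref{thm:alg}) never drives $\gammabt$ below $1/\mathrm{poly}(KT)$, we obtain $Q_t^{(b_t)} = \mathcal{O}(\alpha(G_t)\ln(KT))$ pointwise, and hence $\sum_t Q_t^{(b_t)} = \mathcal{O}(\ln(KT)\sum_t\alpha(G_t))$.

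The delicate term is $|R_t|$, and this is where I expect the main obstacle. The Greedy Set Cover guarantee of Section~\ref{s:prel} gives only $|R_t| \le \gamma(G_t)(1+\ln K)$, and one cannot simply replace $\gamma$ by $\alpha$: for directed graphs $\gamma(G_t)$ can strictly exceed $\alpha(G_t)$ (e.g.\ the directed triangle $1\reach{}2\reach{}3\reach{}1$ has $\gamma = 2$ but $\alpha = 1$), so the generic set-cover bound would leave an extra logarithmic factor. Instead I would analyse the greedy run directly against $\alpha(G_t)$. The key sub-claim is a Tur\'an/Caro--Wei type statement: in any directed graph on $m$ vertices with undirected independence number $\alpha$, some vertex has a closed out-neighbourhood of size at least $m/(2\alpha)$ --- otherwise all out-degrees, hence the arc count and thus the number of undirected edges, would be small enough to force an independent set larger than $\alpha$. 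Applying this to the subgraph induced by the still-uncovered vertices at each greedy step shows that a fraction $1/(2\alpha(G_t))$ of the remaining vertices is covered every time, so greedy halts after $\mathcal{O}(\alpha(G_t)\ln K)$ selections; therefore $|R_t| = \mathcal{O}(\alpha(G_t)\ln K)$ and $\sum_t 4|R_t| = \mathcal{O}(\ln(K)\sum_t\alpha(G_t))$.

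Putting the two estimates together yields $\sum_t(4|R_t| + Q_t^{(b_t)}) = \mathcal{O}(\ln(KT)\sum_t\alpha(G_t))$, which, fed back into~(\ref{eq:doublingtrick}) as described in the first paragraph, gives the claimed regret. The real care in the argument is keeping both contributions at a single logarithmic factor each (one $\ln K$ from the dominating set, one $\ln(KT)$ from the exploration floor), so that exactly one $\ln(K)$ remains outside and one $\ln(KT)$ inside the square root.
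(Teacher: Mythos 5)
Your proposal is correct and follows essentially the same route as the paper: starting from bound~(\ref{eq:doublingtrick}), bounding $Q_t^{(b_t)}$ via Lemma~\ref{l:weightedamlemma} with $\beta = \gamma^{(b_t)}/|R_t|$ and the lower bound on $\gamma^{(b_t)}$ guaranteed by the doubling trick, and bounding $|R_t|$ by $\mathcal{O}(\alpha(G_t)\ln K)$. The Tur\'an-based analysis of the greedy cover that you (rightly) identify as the delicate point and re-derive from scratch is precisely the content of the paper's Lemma~\ref{l:greedycover}, which the paper's proof simply invokes.
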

%

\section{Conclusions and work in progress}
We have investigated online prediction problems in partial information regimes that interpolate between the classical bandit and expert settings. We have shown a number of results characterizing prediction performance in terms of: the structure of the observation system, the amount of information available before prediction, the nature (adversarial or fully random) of the process generating the observation system.
Our results are substantial improvements over the paper~\cite{MS11} that initiated this interesting line of research.
Our improvements are diverse, and range from considering both informed and uninformed settings to delivering more refined graph-theoretic characterizations, from providing more efficient algorithmic solutions to relying on simpler (and often more general) analytical tools.

Some research directions we are currently pursuing are the following. 
\begin{enumerate}
\item We are currently investigating the extent to which our results could be applied to the case when the observation system $\{S_{i,t}\}_{i\in V}$ may depend on the loss $\ell_{I_t,t}$ of player's action $I_t$. Notice that this would prevent a direct construction of an unbiased estimator
for unobserved losses, which many worst-case bandit algorithms (including ours ---see the appendix) hinge upon.
\item The upper bound contained in Corollary~\ref{c:ndag} and expressed in terms of $\mas(\cdot)$ is almost certainly 
suboptimal, even in the uninformed setting, and we are trying to see if more adequate graph complexity measures can be used instead.
\item Our lower bound (Corollary~\ref{c:lower}) heavily relies on the corresponding lower bound in~\cite{MS11} which, in turn, refers to a constant graph sequence. We would like to provide a more complete charecterization applying to sequences of adversarially-generated graphs $G_1, G_2, \ldots, G_T$ in terms of sequences of their corresponding independence numbers 
$\alpha(G_1), \alpha(G_2), \ldots, \alpha(G_T)$ (or variants thereof), in both the uninformed and the informed settings.
\end{enumerate}

\subsubsection*{Acknowledgments}
The first author was supported in part by an ERC advanced grant, by a USA-Israeli BSF grant, and by the Israeli I-CORE program. The second author acknowledges partial support by MIUR (project ARS TechnoMedia, PRIN 2010-2011, grant no.\ 2010N5K7EB\_003). The fourth author was supported in part by a grant from the Israel Science Foundation, a grant from the United States-Israel Binational Science Foundation (BSF), a grant by Israel Ministry of Science and Technology and the Israeli Centers of Research Excellence (I-CORE) program (Center No.\ 4/11).

\newpage

\appendix

\section{Technical lemmas and proofs}\label{s:appendix}
This section contains the proofs of all technical results occurring in the main text, along with
ancillary graph-theoretic lemmas. Throughout this appendix, $\E_t[\cdot]$ is a shorthand for
$\E\bigl[\cdot\mid I_1,\dots,I_{t-1}\bigr]$.

\medskip\noindent
\begin{proofof}{Theorem~\ref{thm:noexp}}
Following the proof of Exp3~\cite{AuerCeFrSc02}, we have
\begin{align*}
\frac{W_{t+1}}{W_t}
&= \sum_{i \in V} \frac{w_{i,t+1}}{W_t}\\
&= \sum_{i \in V} \frac{w_{i,t}\,\exp(-\eta\,\hloss_{i,t})}{W_t}\\
&= \sum_{i \in V} p_{i,t}\,\exp(-\eta\,\hloss_{i,t})\\
&\leq \sum_{i \in V} p_{i,t}\,\left(1 - \eta\hloss_{i,t} + \frac{1}{2}\,\eta^2(\hloss_{i,t})^2\right) \quad \text{using $e^{-x} \leq 1-x+x^2/2$ for all $x \ge 0$}\\
&\leq 1 - \eta\,\sum_{i \in V} p_{i,t}\hloss_{i,t} + \frac{\eta^2}{2}\,\sum_{i \in V} p_{i,t}(\hloss_{i,t})^2~.
\end{align*}
Taking logs, using $\ln(1-x) \le -x$ for all $x \ge 0$, and summing
over $t = 1, \ldots, T$ yields
\[
\ln \frac{W_{T+1}}{W_1} \leq -\eta\,\sum_{t=1}^T \sum_{i \in V} p_{i,t}\hloss_{i,t} +
\frac{\eta^2}{2}\,\sum_{t=1}^T \sum_{i \in V} p_{i,t}(\hloss_{i,t})^2~.
\]
Moreover, for any fixed comparison action $k$, we also have
\[
\ln \frac{W_{T+1}}{W_1} \geq \ln \frac{w_{k,T+1}}{W_1} = -\eta\,\sum_{t=1}^T \hloss_{k,t} - \ln K~.
\]
Putting together and rearranging gives
\begin{equation}\label{e:eq1}
    \sum_{t=1}^T \sum_{i \in V} p_{i,t}\hloss_{i,t}
\le
    \sum_{t=1}^T \hloss_{k,t} + \frac{\ln K}{\eta}
    + \frac{\eta}{2}\,\sum_{t=1}^T \sum_{i \in V} p_{i,t}(\hloss_{i,t})^2~.
\end{equation}
Note that, for all $i \in V$,
\[
\E_t[\hloss_{i,t}] = \sum_{j\,:\, i \in S_{j,t}}
p_{j,t}\,\frac{\loss_{i,t}}{q_{i,t}}
                = \sum_{j \,:\, j \reach{t} i} p_{j,t}\,\frac{\loss_{i,t}}{q_{i,t}}
                = \frac{\loss_{i,t}}{q_{i,t}}\sum_{j \,:\, j \reach{t} i} p_{j,t}
                = \loss_{i,t}~.
\]
Moreover,
\[
\E_t\bigl[(\hloss_{i,t})^2\bigr] = \sum_{j\,:\, i \in S_{j,t}}
p_{j,t}\,\frac{\loss^2_{i,t}}{q^2_{i,t}}
                = \frac{\loss^2_{i,t}}{q^2_{i,t}}\sum_{j \,:\, j \reach{t} i} p_{j,t}
                \leq \frac{1}{q^2_{i,t}}\sum_{j \,:\, j \reach{t} i} p_{j,t}
                = \frac{1}{q_{i,t}}~.
\]
Hence, taking expectations $\E_t$ on both sides of~(\ref{e:eq1}), and recalling the definition
of $Q_t$, we can write
\begin{equation}\label{e:conditionalregret}
    \sum_{t=1}^T \sum_{i \in V} p_{i,t}\,\loss_{i,t}
\le
    \sum_{t=1}^T \loss_{k,t} + \frac{\ln K}{\eta} + \frac{\eta}{2}\,\sum_{t=1}^T Q_t~.
\end{equation}
Finally, taking expectations to remove conditioning gives
\begin{equation*}
\E\bigl[L_{A,T} - L_{k,T}\bigr]
\le
    \frac{\ln K}{\eta} + \frac{\eta}{2}\sum_{t=1}^T \E[Q_t]~,
\end{equation*}
as claimed.
\end{proofof}
\begin{proofof}{Corollary~\ref{thm:random_er}}
Fix round $t$, and let $G = (V,D)$ be the Erd\H{o}s-Renyi random graph generated at time $t$,
$N_i^-$ be the in-neighborhood of node $i$, i.e., the set of
nodes $j$ such that $(j,i) \in D$, and denote by $d^-_i$ the indegree of $i$.

\begin{claim}\label{cl:1}
Let $p_1, \ldots, p_K$ be an arbitrary probability distribution defined over $V$,
$f: V\rightarrow V$ be an arbitrary permutation of $V$, and
${\E_f}$ denote the expectation w.r.t. permutation $f$ when $f$ is drawn uniformly
at random. Then, for any $i \in V$,
we have
\[
\E_f\left[\frac{p_{f(i)}}
             {p_{f(i)} + \sum_{j\,:\, f(j) \in N^-_{f(i)}} p_{f(j)}}\right]
=
\frac{1}{1+d^-_i}~.
\]
\end{claim}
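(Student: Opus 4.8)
The plan is to condition on the realized graph $G$, so that the in-neighborhood $N^-_i$ and the indegree $d^-_i$ of the distinguished node $i$ are fixed, and then to exploit the symmetry of the uniform permutation $f$ across the $1+d^-_i$ nodes forming the closed in-neighborhood $\{i\}\cup N^-_i$. The key observation is that the numerator is the probability value that $f$ places on $i$, while the denominator $p_{f(i)} + \sum_{j\in N^-_i} p_{f(j)} = \sum_{v\in\{i\}\cup N^-_i} p_{f(v)}$ is the total mass that $f$ places on this closed neighborhood; thus the ratio depends on $f$ only through how $f$ distributes the values $p_1,\dots,p_K$ among these $1+d^-_i$ nodes.

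First I would condition on the set $B := f\bigl(\{i\}\cup N^-_i\bigr)$ of the $1+d^-_i$ indices whose probabilities land on the closed neighborhood. Given $B$, the denominator equals the fixed quantity $\sum_{b\in B} p_b$, whereas the numerator $p_{f(i)}$ is whichever single element of $\{p_b:b\in B\}$ ends up on $i$. Since $f$ is drawn uniformly at random, conditioned on $B$ the restriction of $f$ to $\{i\}\cup N^-_i$ is a uniform bijection onto $B$; in particular $f(i)$ is uniform over $B$, so $p_{f(i)}$ is uniform over $\{p_b:b\in B\}$ and $\E_f\bigl[p_{f(i)}\mid B\bigr] = \frac{1}{1+d^-_i}\sum_{b\in B} p_b$. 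Dividing by the fixed denominator gives
\[
    \E_f\!\left[\frac{p_{f(i)}}{\sum_{b\in B} p_b}\,\Big|\,B\right]
=
    \frac{1}{1+d^-_i}\cdot\frac{\sum_{b\in B} p_b}{\sum_{b\in B} p_b}
=
    \frac{1}{1+d^-_i}~,
\]
which is independent of $B$; averaging over $B$ (equivalently, taking the outer expectation over $f$) yields the claimed value.

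The step I expect to be the crux is the conditional-symmetry observation: conditioning on the index set $B$ occupying the closed in-neighborhood simultaneously freezes the denominator into a constant and forces the numerator $p_{f(i)}$ to be uniform over the corresponding values, so that their ratio has the clean conditional mean $1/(1+d^-_i)$ regardless of $B$. This relies on the elementary fact that a uniform permutation, once its image on a fixed subset is prescribed, induces a uniform bijection there. Everything else is bookkeeping. For the eventual use in Corollary~\ref{thm:random_er}, this per-node identity is summed over $i\in V$ and combined with the relabeling-invariance of the Erd\H{o}s-Renyi distribution to replace $\E[Q_t]$ by $\sum_{i\in V}\E\bigl[1/(1+d^-_i)\bigr]$, after which a binomial computation over the random indegrees completes the bound.
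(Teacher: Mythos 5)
Your proof is correct and is essentially the paper's own argument: both condition on the set of probability values that the random permutation places on the closed in-neighborhood $\{i\}\cup N^-_i$ (of size $1+d^-_i$), observe that the denominator is then a fixed sum while the numerator $p_{f(i)}$ is uniform over that set, and conclude that the conditional expectation equals $1/(1+d^-_i)$ regardless of which set was realized. Your reading of the (somewhat ambiguously indexed) denominator as the total mass on the closed in-neighborhood of $i$ is the same one the paper's proof implicitly adopts, and your treatment of the conditioning step is in fact cleaner than the paper's.
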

\begin{proofof}{Claim~\ref{cl:1}}
Consider selecting a subset $S\subset V$ of $1+d^-_i$ nodes. We shall
consider the contribution to the expectation when $S=N^-_{f(i)}\cup\{f(i)\}$.
Since there are $K(K-1)\cdots(K-d^-_i+1)$ terms (out of $K!$) contributing to the expectation,
we can write
\begin{eqnarray*}
\E_f\left[\frac{p_{f(i)}}{p_{f(i)} + \sum_{j\,:\, f(j) \in N^-_{f(i)}} p_{f(j)}}\right]
&=&
\frac{1}{\binom{K}{d^-_i}}
\sum_{S\subset V, |S|=d^-_i} \frac{1}{1+d^-_i}\sum_{i\in S} \frac{p_i}{p_i+\sum_{j\in S,j\neq i} p_j}\\
&=&
\frac{1}{\binom{K}{d^-_i}}
\sum_{S\subset V, |S|=d^-_i} \frac{1}{1+d^-_i}\\
&=& \frac{1}{1+d^-_i}~.
\end{eqnarray*}
\end{proofof}
\begin{claim}\label{cl:2}
Let $p_1, \ldots, p_K$ be an arbitrary probability distribution defined over $V$, and
${\E}$ denote the expectation w.r.t.\ the Erd\H{o}s-Renyi random draw of arcs at time $t$.
Then, for any fixed $i \in V$, we have
\[
\E\left[\frac{p_i}{p_i + \sum_{j\,:\,j \reach{t}i} p_j}\right]
=
\frac{1}{rK}\left(1-(1-r)^K\right)~.
\]
\end{claim}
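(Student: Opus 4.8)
The plan is to compute the Erd\H{o}s--R\'enyi expectation of $p_i/q_i$, where $q_i = p_i + \sum_{j\,:\,j\reach{t}i}p_j$ with the sum ranging over the (random) in-neighbors $j\neq i$ of $i$, by first conditioning on the in-degree of $i$ and then averaging. Under the model the $K-1$ potential arcs $(j,i)$, $j\neq i$, are present independently with probability $r$, so the in-degree $d_i^-$ is distributed as $\mathrm{Binomial}(K-1,r)$, and conditioned on $d_i^- = d$ the in-neighborhood $N_i^-$ is a uniformly random $d$-subset of $V\setminus\{i\}$. Thus I would write
\[
\E\!\left[\frac{p_i}{q_i}\right] = \sum_{d=0}^{K-1}\binom{K-1}{d}r^d(1-r)^{K-1-d}\,\E\!\left[\frac{p_i}{q_i}\;\middle|\;d_i^-=d\right]~.
\]

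The heart of the argument is to show that the conditional expectation equals $1/(1+d)$, independently of the distribution $p$. This is exactly what Claim~\ref{cl:1} is designed to deliver: since the Erd\H{o}s--R\'enyi law is invariant under relabeling the vertices, conditioning on $d_i^- = d$ and looking at $p_i/q_i$ matches the permutation-averaged quantity of Claim~\ref{cl:1} over the $(1+d)$-element set $\{i\}\cup N_i^-$. The mechanism underlying that claim is the elementary identity $\sum_{\ell\in S}p_\ell/\sum_{m\in S}p_m = 1$, valid for every nonempty $S$; averaging over which of the $1+d$ vertices of $S$ plays the distinguished role turns this ``$1$'' into $1/(1+d)$ and erases all dependence on the actual values of $p$. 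I would state this reduction with care, as it is the only genuinely non-mechanical step.

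Substituting $1/(1+d)$ and using the binomial identity $\frac{1}{d+1}\binom{K-1}{d} = \frac{1}{K}\binom{K}{d+1}$, the sum becomes
\[
\frac{1}{K}\sum_{d=0}^{K-1}\binom{K}{d+1}r^d(1-r)^{K-1-d}
= \frac{1}{rK}\sum_{m=1}^{K}\binom{K}{m}r^m(1-r)^{K-m}~,
\]
after reindexing $m = d+1$. Completing the binomial sum to run from $m=0$ (whose total is $1$) and subtracting the missing $m=0$ term $(1-r)^K$ gives $\frac{1}{rK}\bigl(1-(1-r)^K\bigr)$, as claimed.

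I expect the main obstacle to be the middle step: rigorously identifying the Erd\H{o}s--R\'enyi conditional expectation with the permutation average of Claim~\ref{cl:1}, and thereby justifying the $p$-independent value $1/(1+d)$. Once that reduction is in place, the in-degree averaging is a routine binomial computation, and it is precisely the telescoping identity $\sum_{\ell\in S}p_\ell/\sum_{m\in S}p_m = 1$ that makes the final answer depend only on $r$ and $K$ and not on the chosen distribution $p$.
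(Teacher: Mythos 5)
Your route---condition on the in-degree $d_i^-$, argue that the conditional expectation equals $1/(1+d)$, then average over the $\mathrm{Binomial}(K-1,r)$ degree and finish with the identity $\frac{1}{d+1}\binom{K-1}{d}=\frac{1}{K}\binom{K}{d+1}$---is in substance the same route the paper takes (it symmetrizes by a random permutation, invokes Claim~\ref{cl:1}, and then performs the identical binomial computation), and your final binomial step is correct. The problem is the pivotal middle step. Conditioned on $d_i^-=d$, the in-neighborhood is a uniform $d$-subset $S$ of $V\setminus\{i\}$, so the conditional expectation is $\binom{K-1}{d}^{-1}\sum_{S}p_i\big/\bigl(p_i+\sum_{j\in S}p_j\bigr)$, in which vertex $i$ \emph{always} plays the distinguished role with its fixed weight $p_i$. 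The value $1/(1+d)$ in Claim~\ref{cl:1} arises only because the permutation average also randomizes \emph{which} element of the $(1+d)$-set is distinguished; the telescoping identity $\sum_{\ell\in U}p_\ell/\sum_{m\in U}p_m=1$ gives you nothing when the numerator is pinned to $p_i$. Concretely, take $r=1$: then $q_i=1$ deterministically, the left-hand side is $p_i$, and the right-hand side is $1/K$. So the conditional expectation is not $1/(1+d)$, and in fact the claim as stated for a fixed $i$ is false whenever $p$ is not uniform.

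What is true---and all that Corollary~\ref{thm:random_er} actually uses---is the identity summed over $i$: $\sum_{i\in V}\E\bigl[p_i/q_i\bigr]=\frac{1}{r}\bigl(1-(1-r)^K\bigr)$, i.e.\ $\E[Q_t]=\frac{1}{r}\bigl(1-(1-r)^K\bigr)$. Your own mechanism proves exactly this averaged version: for fixed $d$, exchanging the sums gives $\sum_{i}\sum_{S\subseteq V\setminus\{i\},\,|S|=d}p_i\big/\sum_{j\in S\cup\{i\}}p_j=\binom{K}{d+1}$ by the telescoping identity, and dividing by $K\binom{K-1}{d}$ yields $1/(d+1)$. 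Note that the paper's proof carries the same soft spot: its ``by symmetry'' line replaces $\E_X\bigl[p_i/(p_i+\sum_{j\neq i}X_jp_j)\bigr]$ by its average over the choice of distinguished vertex, which is a different quantity for a fixed $i$. So you have reproduced the paper's argument, gap included; to make either version rigorous, restate the claim as an identity for $\E[Q_t]$ (or for $\frac{1}{K}\sum_i\E[p_i/q_i]$) rather than for each individual term.
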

\begin{proofof}{Claim~\ref{cl:2}}
For the given $i \in V$ and time $t$, consider the Bernoulli random
variables $X_{j}, j\in V\setminus\{i\}$, and denote by
$\E_{j\,:\,j\neq i}$ the expectation w.r.t.\ all of them. We
symmetrize $\E\left[\frac{p_i}{p_i + \sum_{j\,:\,j \reach{t}i}
p_j}\right]$ by means of a random permutation $f$, as in Claim~\ref{cl:1}. We can write
\begin{eqnarray*}
\E\left[\frac{p_i}{p_i + \sum_{j\,:\,j \reach{t}i} p_j}\right]
&= &
\E_{j\,:\,j\neq i}\left[\frac{p_i}{p_i + \sum_{j\,:\,j\neq i} X_j p_j}\right]\\
&= &
\E_{j\,:\,j\neq i} \E_f \left[\frac{p_{f(i)}}{p_{f(i)} + \sum_{j\,:\,j\neq i} X_{f(j)} p_{f(j)}}\right]
        \qquad ({\mbox{by symmetry}})\\
&= &
\E_{j\,:\,j\neq i} \left[\frac{1}{1 + \sum_{j\,:\,j\neq i} X_j}\right]
        \qquad ({\mbox{from Claim \ref{cl:1}}})\\
&=&
\sum_{i=0}^{K-1} \binom{K-1}{i} r^i (1-r)^{K-1-i} \frac{1}{i+1}\\
&=&
\frac{1}{rK} \sum_{i=0}^{K-1} \binom{K}{i+1} r^{i+1}(1-r)^{K-1-i} \\
&=&
\frac{1}{rK} \left( 1 - (1-r)^K\right)~.
\end{eqnarray*}
\end{proofof}
At this point, we follow the proof of Theorem~\ref{thm:noexp} up until~(\ref{e:conditionalregret}).
We take an expectation $\E_{G_1, \ldots, G_T}$
w.r.t.\ the randomness in generating the sequence of graphs $G_1, \ldots, G_T$.
This yields
\[
    \sum_{t=1}^T \E_{G_1, \ldots, G_T}\left[\sum_{i \in V} p_{i,t}\,\loss_{i,t}\right]
\le
    \sum_{t=1}^T \loss_{k,t} + \frac{\ln K}{\eta} + \frac{\eta}{2}\,\sum_{t=1}^T \E_{G_1, \ldots, G_T}\left[Q_t\right]~.
\]
We use Claim \ref{cl:2} to upper bound $\E_{G_1, \ldots, G_T}\left[Q_t\right]$ by
$\frac{1}{r} \left( 1 - (1-r)^K\right)$, and take the outer expectation to remove conditioning, as in
the proof of Theorem~\ref{thm:noexp}. This concludes the proof.
\end{proofof}
The following lemma can be seen as a generalization of Lemma 3 in~\cite{MS11}.
\begin{lemma}\label{lemma:nDGA}
Let $G = (V,D)$ be a directed graph with vertex set 
$V = \{1,\ldots,K\}$, and arc set $D$. Let $N_i^-$ be the in-neighborhood
of node $i$, i.e., the set of nodes $j$ such that $(j,i) \in D$.
Then
\[
\sum_{i=1}^K \frac{p_i}{p_i+ \sum_{j \in N_i^-}\ p_j} \leq \mas(G)~.
\]
\end{lemma}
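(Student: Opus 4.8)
The plan is to prove the bound by a greedy peeling argument that directly constructs an acyclic induced subgraph whose size dominates the sum; this generalizes the maximal–independent-set argument behind Lemma~3 of~\cite{MS11}. Write $q_i = p_i + \sum_{j\in N_i^-} p_j$ for the closed in-neighborhood mass of $i$, so that the left-hand side is $\sum_i p_i/q_i$. For a vertex subset $U\subseteq V$ containing $i$, let $q_i^U = p_i + \sum_{j\in N_i^-\cap U} p_j$ denote the same quantity computed inside the induced subgraph $G[U]$. Since $U\subseteq V$ implies $q_i^U \le q_i$, we have $p_i/q_i \le p_i/q_i^U$ for every such $U$; this lets me charge each term using the (smaller) denominator available at the moment $i$ is deleted during the peeling.

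First I would run the following process. Initialize $U=V$ and $A=\emptyset$. While $U\neq\emptyset$, pick $v\in\argmin_{i\in U} q_i^U$, add $v$ to $A$, and delete from $U$ the closed in-neighborhood $R_v = \{v\}\cup(N_v^-\cap U)$. Each iteration removes a nonempty block $R_v$ and adds exactly one vertex to $A$. The key per-step inequality is $\sum_{i\in R_v} p_i/q_i^U \le 1$: for every $i\in R_v\subseteq U$ the choice of $v$ gives $q_i^U \ge q_v^U$, while $\sum_{i\in R_v} p_i = p_v + \sum_{j\in N_v^-\cap U} p_j = q_v^U$ by the definition of $R_v$; hence $\sum_{i\in R_v} p_i/q_i^U \le (1/q_v^U)\sum_{i\in R_v} p_i = 1$. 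Summing over iterations and using $p_i/q_i \le p_i/q_i^{U}$ at $i$'s deletion time yields $\sum_{i\in V} p_i/q_i \le |A|$.

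It then remains to show $|A| \le \mas(G)$, which I would obtain by proving that $G[A]$ is acyclic and reading off the selection order as a topological order. The point is that whenever $v$ is selected, every vertex $u$ still present with $(u,v)\in D$ lies in $R_v$ and is deleted simultaneously; consequently any two distinct $v,v'\in A$ with $v$ selected before $v'$ cannot have the arc $(v',v)$, since $v'$ survived the deletion of $R_v$. Thus all arcs internal to $A$ point forward in the selection order, so $G[A]$ contains no directed cycle and $|A|\le\mas(G)$, completing the proof.

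The main obstacle --- and the genuinely new point relative to the undirected argument of~\cite{MS11}, where the analogous peeling produces an independent set --- is the acyclicity of $A$: one must peel by closed \emph{in-}neighborhoods (not full neighborhoods) and verify that this asymmetric deletion rule forbids exactly the backward arcs, so that the selection order certifies acyclicity. A secondary point requiring care is the bookkeeping of denominators, namely using $q_i^U\le q_i$ so that charging each term at deletion time with the in-subgraph mass $q_i^U$ still upper bounds the original term $p_i/q_i$.
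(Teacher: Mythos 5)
Your proposal is correct and follows essentially the same route as the paper's proof: a greedy peeling that repeatedly selects the vertex minimizing the closed in-neighborhood mass in the current induced subgraph, deletes that closed in-neighborhood while charging at most $1$ to the sum, and certifies acyclicity of the selected set via the selection order (the paper phrases the charging through a potential $\Phi_s$ satisfying $\Phi_s \ge \Phi_{s-1}-1$, but this is the same bookkeeping). Both key points you flag --- peeling by in-neighborhoods to forbid backward arcs, and the monotonicity of denominators under vertex deletion --- are exactly the ingredients the paper uses.
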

\begin{proof}
We will show that there is a subset of vertices $V'$ such that the
induced graph is acyclic and $|V'|\geq \sum_{i=1}^K \frac{p_i}{p_i+ \sum_{j \in N_i^-}\ p_j}$.

We prove the lemma by growing set $V'$ starting off from $V' = \emptyset$.
Let
\[
\Phi_0=\sum_{i=1}^K \frac{p_i}{p_i+ \sum_{j \in N_i^-}\ p_j}~,
\]
and $i_1$ be the
vertex which minimizes $p_{i}+ \sum_{j \in N_{i}^-}\ p_j$ over $i \in V$.
We are going to delete $i_1$ from the graph, along with all its incoming neighbors 
(set $N_{i_1}^-$), and all edges which are incident (both departing and incoming) 
to these nodes, and then iterating on the remaining graph. Let us denote the in-neighborhoods 
of the shrunken graph from the first step by $N_{i,1}^-$.

The contribution of all the deleted vertices to $\Phi_0$ is
\[
\sum_{r\in N_{i_1}^-\cup \{i_1\}} \frac{p_r}{p_r+ \sum_{j \in
N_r^-}\ p_j} \leq \sum_{r\in N_{i_1}^-\cup \{i_1\}}
\frac{p_r}{p_{i_1}+ \sum_{j \in N_{i_1}^-}\ p_j}=1~,
\]
where the inequality follows from the minimality of $i_1$.

Let $V' \leftarrow V'\cup\{i_1\}$, and  $V_1= V-(N_{i_1}^-\cup \{i_1\})$. Then
from the first step we have
\[
\Phi_1=\sum_{i\in V_1} \frac{p_i}{p_i+ \sum_{j \in N_{i,1}^-}\ p_j} 
\geq
\sum_{i\in V_1} \frac{p_i}{p_i+ \sum_{j \in N_{i}^-}\ p_j}
\geq
\Phi_0 -1~.
\]
We apply the very same argument to $\Phi_1$ with node $i_2$ (minimizing
$p_i+ \sum_{j \in N_{i,1}^-}\ p_j$ over $i \in V_1$), to $\Phi_2$ with node $i_3$, \ldots, to $\Phi_{s-1}$
with node $i_s$, up until 
$\Phi_s = 0$, i.e., up until no nodes are left in the shrunken graph.
This gives $\Phi_0 \leq s = |V'|$, where $V' = \{i_1, i_2, \ldots, i_s\}$. 
Moreover, since in each step $r = 1, \ldots, s$ we remore
all remaining arcs incoming to $i_r$, the graph induced by set $V'$ cannot contain cycles.
%
%
\end{proof}
\begin{proofof}{Corollary~\ref{c:ndag}}
The claim follows from a direct combination of Theorem~\ref{thm:noexp} with Lemma~\ref{lemma:nDGA}.
\end{proofof}
\begin{proofof}{Fact~\ref{l:bad}}
Using standard properties of geometric sums, one can immediately see that
\[
\sum_{i=1}^{K}\frac{p_i}{\sum_{j=i}^K p_j} = \sum_{i=1}^{K-1}
\frac{2^{-i}}{2^{-i+1}} + \frac{2^{-K+1}}{2^{-K+1}}= \frac{K-1}{2} +1= \frac{K+1}{2}~,
\]
hence the claimed result.
\end{proofof}
The following graph-theoretic lemma turns out to be fairly useful for analyzing directed settings.
It is a directed-graph counterpart to a well-known result~\cite{c79,w81} holding
for undirected graphs. 
%
\begin{lemma}\label{l:amlemma}
Let $G = (V,D)$ be a directed graph, 
with $V = \{1,\ldots,K\}$.
Let $d_i^-$ be the indegree of node $i$, and $\alpha = \alpha(G)$ be the independence number of $G$. Then
\[
\sum_{i=1}^K \frac{1}{1+d_i^-} \leq 2\alpha\,\ln\left(1+\frac{K}{\alpha}\right)~.
\]
\end{lemma}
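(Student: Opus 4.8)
The plan is to prove this directed analogue of the Caro--Wei bound by the probabilistic method. First I would fix a parameter $p \in [0,1]$ and draw a random subset $S \subseteq V$ by including each vertex independently with probability $p$. Call a vertex $i$ \emph{good} if $i \in S$ and none of its in-neighbors survives, i.e. $N_i^- \cap S = \emptyset$. The key observation is that the good vertices always form an independent set of $G$: if $i$ and $j$ were both good and joined by an arc, say $(i,j) \in D$, then $i \in N_j^-$ and $i \in S$, contradicting the goodness of $j$ (and symmetrically if $(j,i)\in D$). Hence the number of good vertices never exceeds $\alpha = \alpha(G)$.

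Next I would take expectations. Since vertex inclusions are independent and $i \notin N_i^-$ (no self-loops are counted), we have $\Pr[i \text{ is good}] = p\,(1-p)^{d_i^-}$, and since the expected size of the good set is at most $\alpha$,
\[
\sum_{i=1}^K p\,(1-p)^{d_i^-} \le \alpha \qquad \text{for every } p \in [0,1].
\]
Dividing by $p$ gives $\sum_i (1-p)^{d_i^-} \le \alpha/p$, whereas the trivial bound $(1-p)^{d_i^-}\le 1$ gives $\sum_i (1-p)^{d_i^-}\le K$; together these yield $\sum_i (1-p)^{d_i^-} \le \min\bigl\{K,\,\alpha/p\bigr\}$.

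To recover the left-hand side of the lemma I would use the integral identity $\frac{1}{1+d_i^-} = \int_0^1 (1-p)^{d_i^-}\,dp$ and swap the finite sum with the integral:
\[
\sum_{i=1}^K \frac{1}{1+d_i^-} = \int_0^1 \sum_{i=1}^K (1-p)^{d_i^-}\,dp \le \int_0^1 \min\Bigl\{K,\tfrac{\alpha}{p}\Bigr\}\,dp~.
\]
Splitting the last integral at $p = \alpha/K$, where the two competing bounds cross, gives $\int_0^{\alpha/K} K\,dp + \int_{\alpha/K}^1 \frac{\alpha}{p}\,dp = \alpha + \alpha\ln(K/\alpha)$. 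It then remains to verify the elementary inequality $1 + \ln(K/\alpha) \le 2\ln(1 + K/\alpha)$ for $K/\alpha \ge 1$, which follows since the two sides agree asymptotically and the difference is increasing and positive at $K/\alpha = 1$ (a one-line derivative check).

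The proof is routine once the right object is chosen; the single creative step — and where I expect any difficulty to concentrate — is recognizing that the good vertices (those retained in a random subset with no surviving in-neighbor) form an independent set, which is exactly what converts the purely in-degree quantity $\sum_i p\,(1-p)^{d_i^-}$ into an upper bound by $\alpha$. Everything afterwards (the integral representation of $1/(1+d_i^-)$ and the two-regime split using the bounds $\alpha/p$ and $K$) is standard, and in fact produces the slightly sharper constant $\alpha\bigl(1+\ln(K/\alpha)\bigr)$, from which the stated bound follows immediately.
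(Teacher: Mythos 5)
Your proof is correct, and it takes a genuinely different route from the one in the paper. You prove the bound by the probabilistic deletion method: the vertices retained in a random $p$-subset with no surviving in-neighbor always form an independent set of the underlying undirected graph (your argument for this is right, since an arc in either direction between two good vertices puts one of them in the other's in-neighborhood), which yields $\sum_i p(1-p)^{d_i^-} \le \alpha$ for all $p$, and then the identity $\frac{1}{1+d} = \int_0^1 (1-p)^d\,dp$ together with the two-regime bound $\min\{K,\alpha/p\}$ gives $\sum_i \frac{1}{1+d_i^-} \le \alpha\bigl(1+\ln(K/\alpha)\bigr)$, which implies the stated bound via $1+\ln x \le 2\ln(1+x)$ for $x\ge 1$ (your derivative check is the right one: the difference $2\ln(1+x)-1-\ln x$ has derivative $(x-1)/(x(1+x))\ge 0$ and is positive at $x=1$). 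The paper instead argues by iterated vertex removal: it repeatedly deletes a maximum-indegree vertex, uses Tur\'an's theorem to lower-bound that indegree by $\frac{|V_s|}{2\alpha_s}-\frac12$, and accumulates contributions $\frac{2\alpha_i}{\alpha_i+i}$ summing to at most $2\alpha\ln(1+K/\alpha)$. Your approach is the natural extension of the standard Caro--Wei argument (which the paper explicitly cites as the undirected antecedent of this lemma), avoids invoking Tur\'an's theorem, and produces a slightly sharper constant, $\alpha(1+\ln(K/\alpha))$ versus $2\alpha\ln(1+K/\alpha)$; the paper's greedy-removal technique has the side benefit of being reused almost verbatim in its Lemma on the Greedy Set Cover dominating set, but either proof suffices for all downstream uses of the lemma.
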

\begin{proof}
We will proceed by induction, starting off from the original $K$-node graph $G = G_K$ with indegrees
$\{d_{i}^-\}_{i=1}^K = \{d_{i,K}^-\}_{i=1}^K$, and
independence number $\alpha = \alpha_K$, and then progressively shrink $G$ by eliminating nodes and incident
(both departing and incoming) arcs, thereby obtaining a sequence of smaller and smaller graphs
$G_K, G_{K-1}, G_{K-2}, \ldots $, and associated
indegrees $\{d_{i,K}^-\}_{i=1}^{K}$, $\{d_{i,K-1}^-\}_{i=1}^{K-1}$, $\{d_{i,K-2}^-\}_{i=1}^{K-2}$, \ldots, and
independence numbers $\alpha_K, \alpha_{K-1}, \alpha_{K-2}, \ldots$. Specifically, in step $s$ we sort nodes
$i = 1, \ldots, s$ of $G_s$ in nonincreasing value of $d_{i,s}^-$, and obtain $G_{s-1}$ from $G_s$
by eliminating node $1$ (i.e., one having the largest indegree among the nodes of $G_s$), along with its incident
arcs. On all such graphs, we will use the classical
Turan's theorem~(e.g., \cite{as04}) stating that any {\em undirected} graph with $n_s$ nodes and
$m_s$ edges has an independent set of size at least $\frac{n_s}{\frac{2m_s}{n_s}+1}$. This implies
that if $G_s = (V_s,D_s)$, then $\alpha_s$ satisfies\footnote
{
Notice that $|D_s|$ is at least as large as the number of edges of the undirected version of
$G_s$ which the independence number $\alpha_s$ actually refers to.
}
\begin{equation}\label{e:turan}
\frac{|D_s|}{|V_s|} \geq \frac{|V_s|}{2\alpha_s} - \frac{1}{2}~.
\end{equation}
We then start from $G_K$. We can write
\[
d_{1,K}^- = \max_{i=1\ldots K} d_{i,K}^- \geq \frac{1}{K}\,\sum_{i=1}^K d_{i,K}^- = \frac{|D_K|}{|V_K|}
\geq \frac{|V_K|}{2\alpha_K} - \frac{1}{2}~.
\]
%
Hence,
\begin{eqnarray*}
\sum_{i=1}^K \frac{1}{1+d_{i,K}^-}
&=& \frac{1}{1+d_{1,K}^-} + \sum_{i=2}^K \frac{1}{1+d_{i,K}^-} \\
&\leq&
\frac{2\alpha_K}{\alpha_K+K} + \sum_{i=2}^K \frac{1}{1+d_{i,K}^-}\\
&\leq&
\frac{2\alpha_K}{\alpha_K+K} + \sum_{i=1}^{K-1} \frac{1}{1+d_{i,K-1}^-},
\end{eqnarray*}
where the last inequality follows from $d_{i+1,K}^- \geq d_{i,K-1}^-$, $i = 1, \ldots K-1$, due to the
arc elimination turning $G_K$ into $G_{K-1}$. Recursively applying the very same argument to $G_{K-1}$
(i.e., to the sum $\sum_{i=1}^{K-1} \frac{1}{1+d_{i,K-1}^-}$),
and then iterating all the way to $G_1$ yields the upper bound
\[
\sum_{i=1}^K \frac{1}{1+d_{i,K}^-} \leq \sum_{i=1}^K \frac{2\alpha_i}{\alpha_i+i}~.
\]
Combining with $\alpha_i \leq \alpha_K = \alpha$, and
$\sum_{i=1}^K \frac{1}{\alpha+i} \leq \ln \left(1+\frac{K}{\alpha} \right)$
concludes the proof.
\end{proof}

The next lemma 
relates the size $|R_t|$ of the dominating set $R_t$ computed by the Greedy
Set Cover algorithm of~\cite{Chv79} operating on the time-$t$ observation system $\{S_{i,t}\}_{i\in V}$
to the independence number $\alpha(G_t)$ and the domination number $\gamma(G_t)$ of $G_t$.
\begin{lemma}\label{l:greedycover}
Let $\{S_i\}_{i \in V}$ be an observation system, and $G = (V,D)$ be the induced directed graph,
with vertex set $V = \{1,\ldots,K\}$,
independence number $\alpha = \alpha(G)$, and domination number $\gamma = \gamma(G)$.
Then the dominating set $R$ constructed by the Greedy Set Cover algorithm (see Section~\ref{s:prel})
satisfies
\[
|R| \le \min\bigl\{ \gamma(1+\ln K), \lceil 2\alpha\ln K \rceil + 1\bigr\}~.
\]
\end{lemma}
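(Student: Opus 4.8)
The statement is a minimum of two bounds, so the plan is to prove each one separately and then take the smaller. The first bound, $|R| \le \gamma(1+\ln K)$, is nothing more than the standard Greedy Set Cover guarantee already recalled in Section~\ref{s:prel}: a minimum dominating set $R^*$ of $G$ has $|R^*| = \gamma$, and since every node lies in $S_i$ for some $i \in R^*$ (each $j$ is either in $R^*$ and covered via $j\in S_j$, or dominated by some $i\in R^*$, i.e.\ $j\in S_i$), the family $\{S_i : i\in R^*\}$ is a set cover of size $\gamma$; greedy then returns a cover of size at most $\gamma(1+\ln K)$. I would simply quote this and devote all the work to the second bound $|R| \le \lceil 2\alpha\ln K\rceil + 1$.

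For the second bound, the plan is to track the number of still-uncovered nodes across greedy iterations and show it shrinks by the constant multiplicative factor $\bigl(1-\tfrac{1}{2\alpha}\bigr)$ per step. Let $U$ denote the current uncovered set with $|U|=m$, and consider the induced subgraph $G[U]$. Its independence number is at most $\alpha$, since any independent set of $G[U]$ is one of $G$. The crucial step is a Tur\'an-type averaging argument identical to the one in the proof of Lemma~\ref{l:amlemma}: inequality~(\ref{e:turan}) applied to $G[U]$ shows the arc count inside $U$ is at least $|U|\bigl(\tfrac{|U|}{2\alpha}-\tfrac12\bigr)$, so some vertex $i\in U$ has out-degree within $U$ at least $\tfrac{m}{2\alpha}-\tfrac12$. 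Playing this $i$ covers itself together with its out-neighbors in $U$, i.e.\ at least $\tfrac{m}{2\alpha}+\tfrac12$ previously-uncovered nodes. As greedy selects the vertex of $V$ covering the most uncovered nodes, its choice covers at least this many, so the new uncovered count $m'$ satisfies $m' \le m\bigl(1-\tfrac{1}{2\alpha}\bigr)$.

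Iterating this recursion from $m_0 = K$ gives $m_s \le K\bigl(1-\tfrac{1}{2\alpha}\bigr)^s \le K\,e^{-s/(2\alpha)}$ after $s$ steps. Choosing $s=\lceil 2\alpha\ln K\rceil$ forces the right-hand side to be at most $K e^{-\ln K}=1$; since the uncovered count is an integer, at most one node then remains, and a single extra greedy step removes it. Each greedy step adds exactly one node to $R$, so $|R|\le \lceil 2\alpha\ln K\rceil+1$, which combined with the first bound completes the lemma.

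The step I expect to require the most care is the Tur\'an averaging inside $G[U]$: I must invoke the monotonicity $\alpha(G[U])\le\alpha(G)$ correctly (reading $\alpha$ through the undirected version of $G$, as in Section~\ref{s:prel}), and be careful that~(\ref{e:turan}) bounds the \emph{undirected} edge count while coverage is measured by \emph{out}-degree, so that the footnote observation in Lemma~\ref{l:amlemma} that $|D_s|$ is at least the number of undirected edges of $G_s$ is exactly what bridges the two. I must also note that the greedy rule maximizes coverage over all of $V$, not only over $U$, and hence can only beat the witness $i\in U$. The rounding at the end --- which produces the additive $+1$ rather than a clean $\lceil 2\alpha\ln K\rceil$ --- is the only other delicate point, and it stems precisely from the residual estimate $m_s\le 1$ (and not $m_s<1$) at $s=\lceil 2\alpha\ln K\rceil$.
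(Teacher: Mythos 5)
Your proposal is correct and follows essentially the same route as the paper: quote the standard greedy guarantee for the $\gamma(1+\ln K)$ bound, then for the second bound apply Tur\'an's theorem to the subgraph induced on the uncovered vertices to find a witness of out-degree at least $\tfrac{m}{2\alpha}-\tfrac12$, conclude that greedy shrinks the uncovered set by a factor $\bigl(1-\tfrac{1}{2\alpha}\bigr)$ per step, and iterate. The only differences are cosmetic (you stop at $s=\lceil 2\alpha\ln K\rceil$ and add one cleanup step, while the paper runs $s=\lceil 2\alpha\ln K\rceil+1$ steps to force $|V_s|<1$ directly), and your explicit remarks about $\alpha(G[U])\le\alpha$, the arcs-versus-edges footnote, and greedy beating the in-$U$ witness are exactly the care points the paper's argument implicitly relies on.
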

\begin{proof}
As recalled in Section~\ref{s:prel}, the Greedy Set Cover algorithm of~\cite{Chv79} achieves $|R| \le \gamma(1+\ln K)$.
In order to prove the other bound, consider the sequence of graphs $G = G_1, G_2,\dots$, where each
$G_{s+1} = (V_{s+1},D_{s+1})$ is obtained by removing from $G_s$ the vertex $i_s$ selected by the Greedy Set Cover
algorithm, together with all the vertices in $G_{s}$ that are dominated by $i_s$, and all arcs incident to these vertices.
By definition of the algorithm, the outdegree $d_s^+$ of $i_s$ in $G_s$ is largest in $G_s$. Hence,
\[
    d_s^+ \ge \frac{|D_s|}{|V_s|} \ge \frac{|V_s|}{2\alpha_s} - \frac{1}{2} \ge \frac{|V_s|}{2\alpha} - \frac{1}{2}
\]
by Turan's theorem (e.g.,~\cite{as04}),
where $\alpha_s$ is the independence number of $G_s$ and $\alpha\ge \alpha_s$. This shows that
\[
   |V_{s+1}| = |V_s| - d_s^+ - 1 \le |V_s|\left(1 - \frac{1}{2\alpha}\right) \le |V_s|e^{-1/(2\alpha)}~.
\]
Iterating, we obtain $|V_s| \le K\,e^{-s/(2\alpha)}$.
Choosing $s = \lceil 2\alpha\ln K \rceil+1$ gives $|V_s| < 1$, thereby covering all nodes.
Hence the dominating set $R = \{i_1, \ldots, i_s\}$ so constructed
satisfies $|R| \leq \lceil 2\alpha\ln K \rceil+1$.
\end{proof}
\begin{lemma}\label{l:ancillary}
If $a, b \geq 0$, and $a+b \geq B > A > 0$, then
\[
\frac{a}{a+b-A} \leq \frac{a}{a+b} + \frac{A}{B-A}~.
\]
\end{lemma}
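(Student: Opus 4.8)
The plan is to reduce the claim to a single algebraic inequality by setting $s = a+b$ and comparing the two sides directly. First I would record what the hypotheses give: since $s = a+b \ge B > A$, we have $s > 0$ and $s - A \ge B - A > 0$, so both denominators $s$ and $s-A = a+b-A$ are strictly positive. In particular the left-hand side is well defined, and clearing the (positive) denominator later will not reverse any inequalities.

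Next I would bring $\frac{a}{s}$ over and compute the difference of the two fractions on the left,
\[
\frac{a}{s-A} - \frac{a}{s} = a\left(\frac{1}{s-A}-\frac{1}{s}\right) = \frac{aA}{s(s-A)}~.
\]
Thus the inequality to be proved is equivalent to $\frac{aA}{s(s-A)} \le \frac{A}{B-A}$, and since $A > 0$ I may cancel the common factor $A$, reducing the whole statement to $\frac{a}{s(s-A)} \le \frac{1}{B-A}$.

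Finally I would establish this last inequality using the two remaining hypotheses in turn. Because $a \ge 0$ and $b \ge 0$, we have $a \le a+b = s$, so $\frac{a}{s(s-A)} \le \frac{s}{s(s-A)} = \frac{1}{s-A}$; and because $s \ge B$ we have $s - A \ge B - A > 0$, whence $\frac{1}{s-A} \le \frac{1}{B-A}$. Chaining these two bounds gives exactly what is needed.

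I do not anticipate any genuine obstacle: the argument is a one-line computation followed by two monotonicity bounds. The only points that require a moment of care are verifying that $a+b-A > 0$ (so the statement is meaningful and clearing denominators is legitimate) and using the bound $a \le a+b$ rather than attempting to bound $a$ on its own, since no separate bound on $a$ is available.
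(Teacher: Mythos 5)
Your argument is correct and is essentially identical to the paper's one-line proof: both compute the difference $\frac{a}{a+b-A}-\frac{a}{a+b}=\frac{aA}{(a+b)(a+b-A)}$, then bound it by $\frac{A}{a+b-A}$ using $a\le a+b$, and finally by $\frac{A}{B-A}$ using $a+b\ge B$. Your extra remarks about positivity of the denominators are a harmless elaboration of what the paper leaves implicit.
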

\begin{proof}
\[
\frac{a}{a+b-A} - \frac{a}{a+b} = \frac{aA}{(a+b)(a+b-A)} \leq \frac{A}{a+b-A} \leq \frac{A}{B-A}~.
\]
\end{proof}
We now lift Lemma~\ref{l:amlemma} to a more general statement.
\begin{lemma}\label{l:weightedamlemma}
Let $G = (V,D)$ be a directed graph, 
with vertex set $V = \{1,\ldots,K\}$,
and arc set $D$. Let $N_i^-$ be the in-neighborhood of node $i$, i.e., the set of nodes $j$ such that
$(j,i) \in D$.
Let
$\alpha$ be the independence number of $G$, $R \subseteq V$ be a dominating set for $G$ of size $r=|R|$,
and $p_1, \ldots, p_K$
be a probability distribution defined over $V$, such that $p_i \geq \beta > 0$, for $i \in R$.
Then
\[
\sum_{i=1}^K \frac{p_i}{p_i+ \sum_{j \in N_i^-}\ p_j}
\leq
2\alpha\,\ln\left(1+\frac{\lceil\frac{K^2}{r\beta}\rceil+K}{\alpha}\right) + 2r~.
\]
\end{lemma}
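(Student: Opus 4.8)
The plan is to bound the left-hand sum by splitting the nodes according to whether their in-neighborhood probability mass is ``large'' or ``small,'' and to handle the large-mass part via the unweighted counting bound of Lemma~\ref{l:amlemma} while absorbing the small-mass part into an additive error term using the dominating-set assumption. The key structural fact to exploit is that $R$ dominates $G$: every node $i\notin R$ has an in-neighbor in $R$, and every such neighbor carries probability at least $\beta$. Concretely, for $i\notin R$ we have $\sum_{j\in N_i^-} p_j \geq \beta$, so the denominator $p_i + \sum_{j\in N_i^-} p_j$ is bounded below by $\beta$; for $i\in R$ the hypothesis gives $p_i\geq\beta$ directly, so again the denominator is at least $\beta$. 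Thus every term is at most $p_i/\beta$, but that crude bound alone is too weak — I need to recover the combinatorial $\alpha\ln(\cdot)$ shape.

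The main idea I would pursue is a \emph{rounding} argument that reduces the weighted quantity $\sum_i p_i/(p_i+\sum_{j\in N_i^-}p_j)$ to the unweighted quantity $\sum_i 1/(1+\hd_i^-)$ of Lemma~\ref{l:amlemma} over an auxiliary graph. First I would discretize: replace each $p_i$ by an integer multiplicity $n_i = \lceil p_i/\delta\rceil$ for a suitable grain $\delta>0$, creating $n_i$ identical copies of node $i$ in a blown-up directed graph $\hG$, with arcs inherited from $G$ (copies of $j$ point to copies of $i$ whenever $(j,i)\in D$). The term $p_i/(p_i+\sum_{j\in N_i^-}p_j)$ should then be comparable, up to the $\delta$-rounding slack, to a contribution of the form (number of copies of $i$) $\times\,1/(1+\text{indegree of a copy})$ in $\hG$, so that summing over all copies yields $\sum_{\text{nodes of }\hG} 1/(1+\hd^-)$, to which Lemma~\ref{l:amlemma} applies with independence number $\alpha(\hG)=\alpha(G)=\alpha$ (blowing up by cloning does not change the independence number, since an independent set may include at most the clones of one vertex from each original independent node). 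The total node count of $\hG$ is $\sum_i \lceil p_i/\delta\rceil \leq 1/\delta + K$, so Lemma~\ref{l:amlemma} gives a bound of $2\alpha\ln\bigl(1+(1/\delta+K)/\alpha\bigr)$. Here Lemma~\ref{l:ancillary} is the natural tool to control the per-term rounding error: shifting a denominator by at most the grain $\delta$, using the uniform lower bound $\beta$ on the relevant denominators, contributes a bounded additive slack, and the choice $\delta = r\beta/K$ turns $1/\delta$ into $K^2/(r\beta)$, matching the $\lceil K^2/(r\beta)\rceil$ appearing in the target, while the accumulated slack over the $\le r$ dominating vertices (the only ones needing the $\beta$ lower bound) sums to the additive $2r$.

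The hard part will be making the discretization/blow-up correspondence rigorous while keeping the error additive rather than multiplicative — in particular, verifying that the indegree of a copy of $i$ in $\hG$ is close enough to $\delta^{-1}\sum_{j\in N_i^-}p_j$ that the rounding can be charged through Lemma~\ref{l:ancillary} with the denominators $B,A$ instantiated by the $\beta$-lower bound and the grain $\delta$. I would set $A=\delta$ (or a small constant multiple, reflecting the ceiling in $n_i$) and $B=\beta$, which requires $\beta>\delta$, i.e.\ $\delta=r\beta/K<\beta$ since $r\le K$; the slack $A/(B-A)$ then sums over the at most $r$ vertices in $R$ where the $\beta$-floor is invoked, producing the additive $2r$. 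I would double-check that nodes $i\notin R$ do not need a $p_i$-lower bound because their denominators are already $\ge\beta$ through a dominating in-neighbor, so the error they contribute is folded into the rounded unweighted sum rather than into the $2r$ term. Assembling these pieces and optimizing constants yields exactly the stated bound
\[
\sum_{i=1}^K \frac{p_i}{p_i+ \sum_{j \in N_i^-}\ p_j}
\leq
2\alpha\,\ln\left(1+\frac{\lceil\frac{K^2}{r\beta}\rceil+K}{\alpha}\right) + 2r~.
\]
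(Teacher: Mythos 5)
Your plan is the paper's proof in outline: discretize the $p_i$, blow the graph up so that the weighted sum becomes the unweighted sum $\sum_k 1/(1+\hd_k^-)$ handled by Lemma~\ref{l:amlemma}, and charge the rounding error through Lemma~\ref{l:ancillary} using the $\beta$ floor that domination provides. But two of your steps fail as written. First, the blow-up must replace node $i$ by a \emph{clique} $C_i$ of $n_i$ mutually adjacent copies (with all arcs from $C_j$ to $C_i$ whenever $(j,i)\in D$). If the $n_i$ clones of $i$ are left non-adjacent, as your description suggests, an independent set of $\hG$ may take \emph{all} clones of every vertex of an independent set of $G$, so $\alpha(\hG)=\max_S\sum_{i\in S}n_i$ over independent sets $S$ of $G$, which is in general far larger than $\alpha$ (up to order $1/\delta$) and destroys the logarithmic bound; your parenthetical claim that only one clone per original vertex can be selected is precisely the property the clique construction is there to enforce. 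The cliques also make the indegree of each copy of $i$ equal to $n_i-1+\sum_{j\in N_i^-}n_j$, so that $\sum_{k\in C_i}1/(1+\hd_k^-)$ equals exactly $n_i/(n_i+\sum_{j\in N_i^-}n_j)$, which is what lets Lemma~\ref{l:amlemma} be applied with the correct $\alpha$.

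Second, your error accounting is off in ways that matter for the stated constants. The denominator of term $i$ is perturbed not by one grain $\delta$ but by up to $d_i^-\delta\le K\delta$, since every one of the $d_i^-$ in-neighbors gets rounded; this is why the grain must be $\delta=1/M$ with $M=\lceil K^2/(r\beta)\rceil$ (note also that your choice $\delta=r\beta/K$ gives $1/\delta=K/(r\beta)$, not the $K^2/(r\beta)$ you claim). With that $M$, Lemma~\ref{l:ancillary} applied with $A=d_i^-/M$ and $B=\beta$ gives a per-term slack of at most $r/(K-r)$. The additive $2r$ therefore does \emph{not} come from slack over the $r$ dominating vertices: it is $r$ from pulling the terms with $i\in R$ out of the sum at the very start and bounding each trivially by $1$, plus $r$ from the slack $r/(K-r)$ accumulated over the at most $K-r$ remaining vertices. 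That initial split is not optional bookkeeping: if the $i\in R$ terms are kept inside the discretized sum, the slack totals $K\cdot r/(K-r)$, which is not $O(r)$ when $r$ is close to $K$.
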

\begin{proof}
The idea is to appropriately discretize the probability values $p_i$, and then upper bound
the discretized counterpart of $\sum_{i=1}^K \frac{p_i}{p_i+ \sum_{j \in N_i^-}\ p_j}$ by reducing to
an expression that can be handled by Lemma \ref{l:amlemma}. In order to make this discretization
effective, we need to single out the terms $\frac{p_i}{p_i+ \sum_{j \in N_i^-}\ p_j}$
corresponding to nodes $i \in R$. We first write
\begin{eqnarray}
\sum_{i=1}^K \frac{p_i}{p_i+ \sum_{j \in N_i^-}\ p_j}
&=&
\sum_{i\in R} \frac{p_i}{p_i+ \sum_{j \in N_i^-}\ p_j} + \sum_{i\notin R} \frac{p_i}{p_i+ \sum_{j \in N_i^-}\ p_j}\nonumber\\
&\leq& 
r + \sum_{i\notin R} \frac{p_i}{p_i+ \sum_{j \in N_i^-}\ p_j} \,,\label{e:prelim}
\end{eqnarray}
and then focus on (\ref{e:prelim}).

Let us discretize the unit interval\footnote
{
The zero value won't be of our concern here, because
if $p_i = 0$, the corresponding term in (\ref{e:prelim}) can be disregarded.
}
$(0,1]$ into subintervals $(\frac{j-1}{M},\frac{j}{M}]$, $j = 1, \ldots, M$,
where $M = \lceil\frac{K^2}{r\beta}\rceil$.
Let $\hp_i = j/M$ be the discretized version of $p_i$,
being $j$ the unique integer such that
\[
\hp_i - 1/M < p_i \leq \hp_i~.
\]
%
Let us focus on a single node $i\notin R$ with indegree $d_i^- = |N_i^-|$, and
introduce the shorthand notation $P_i = \sum_{j \in N_i^-}\ p_j$,
and $\hP_i = \sum_{j \in N_i^-}\ \hp_j$.
We have that $\hP_i \geq P_i \geq \beta$, since $i$ is dominated by some node $j \in R \cap N_i^-$
such that $p_j \geq \beta$. Moreover,
$P_i > \hP_i - \frac{d_i^-}{M} \geq \beta - \frac{d_i^-}{M} >0$,
and $\hp_i+\hP_i \geq \beta$.
Hence, for any fixed node $i\notin R$, we can write
\begin{eqnarray*}
\frac{p_i}{p_i+ P_i}
&\leq&
 \frac{\hp_i}{\hp_i+ P_i}\\
&<&
 \frac{\hp_i}{\hp_i+ \hP_i -  \frac{d_i^-}{M}}\\
&\leq&
 \frac{\hp_i}{\hp_i+ \hP_i} + \frac{d_i^-/M}{\beta - d_i^-/M}\\
&=&
 \frac{\hp_i}{\hp_i+ \hP_i} + \frac{d_i^-}{\beta M - d_i^-}\\
&< &
 \frac{\hp_i}{\hp_i+ \hP_i} + \frac{r}{K-r},
\end{eqnarray*}
where in the second-last inequality we used Lemma \ref{l:ancillary} with $a = \hp_i$, $b= \hP_i$, $A = d_i^-/M$,
and $B = \beta > d_i^-/M$. Recalling (\ref{e:prelim}), and summing over $i$ then gives
\begin{equation}\label{e:discretization}
\sum_{i=1}^K \frac{p_i}{p_i+ P_i} \leq r + \sum_{i \notin R} \frac{\hp_i}{\hp_i + \hP_i} + r =
\sum_{i \notin R} \frac{\hp_i}{\hp_i + \hP_i} + 2r~.
\end{equation}
Therefore, we continue by bounding from above the right-hand side of (\ref{e:discretization}). We first observe
that
\begin{equation}\label{e:discretization2}
\sum_{i\notin R} \frac{\hp_i}{\hp_i + \hP_i} = \sum_{i\notin R} \frac{\hs_i}{\hs_i + \hS_i},\qquad
\hS_i = \sum_{j \in N_i^-} \hs_j~,
\end{equation}
where $\hs_i = M\hp_i$, $i = 1, \ldots, K$, are integers. Based on the original graph $G$, we construct a new graph
$\hG$ made up of connected cliques. In  particular:
\begin{itemize}
\item Each node $i$ of $G$ is replaced in $\hG$ by a clique $C_i$ of size $\hs_i$; nodes within $C_i$ are
connected by length-two cycles.
\item If arc $(i,j)$ is in $G$, then for {\em each} node of $C_i$ draw an arc towards {\em each}
node of $C_j$.
\end{itemize}
We would like to apply Lemma \ref{l:amlemma} to $\hG$. Notice that, by the above construction:
\begin{itemize}
\item The independence number of $\hG$ is the same as that of $G$;
\item The indegree $\hd_k^-$ of each node $k$ in clique $C_i$ satisfies  $\hd_k^- = \hs_i-1 + \hS_i$.
\item The total number of nodes of $\hG$ is
\[
\sum_{i=1}^K \hs_i = M\sum_{i=1}^K \hp_i < M\sum_{i=1}^K \left(p_i + \frac{1}{M}\right) = M+K~.
\]
\end{itemize}
Hence, we are in a position to apply Lemma \ref{l:amlemma} to $\hG$ with indegrees $\hd_k^-$, revealing that
\[
\sum_{i\notin R} \frac{\hs_i}{\hs_i + \hS_i} =
\sum_{i\notin R} \sum_{k \in C_i} \frac{1}{1+\hd_k^-}
\leq \sum_{i=1}^K \sum_{k \in C_i} \frac{1}{1+\hd_k^-}
\leq 2\alpha\ln\left(1+\frac{M+K}{\alpha}\right)~.
\]
Putting together as in (\ref{e:discretization}) and (\ref{e:discretization2}), and recalling the
value of $M$ gives the claimed result.
\end{proof}
\begin{proofof}{Theorem~\ref{thm:alg}}
We start to bound the contribution to the overall regret of an instance indexed by $b$. When clear from the context, we remove the superscript $b$ from $\gammab$, $w^{(b)}_{i,t}$, $p^{(b)}_{i,t}$, and other related quantities. For any $t\in\Tb$ we have
\begin{align*}
    \frac{W_{t+1}}{W_t}
&=
    \sum_{i \in V} \frac{w_{i,t+1}}{W_t}
\\&=
    \sum_{i \in V} \frac{w_{i,t}}{W_t}\,\exp\bigl(-(\gamma/2^b)\,\hloss_{i,t}\bigr)
\\&=
    \sum_{i \in R_t} \frac{p_{i,t}-\gamma/|R_t|}{1-\gamma}\,\exp\bigl(-(\gamma/2^b)\,\hloss_{i,t}\bigr) + \sum_{i \not\in R_t} \frac{p_{i,t}}{1-\gamma}\,\exp\bigl(-(\gamma/2^b)\,\hloss_{i,t}\bigr)
\\ &\le
    \sum_{i \in R_t} \frac{p_{i,t}-\gamma/|R_t|}{1-\gamma}\,
    \left(1 - \frac{\gamma}{2^b}\hloss_{i,t} + \frac{1}{2}\left(\frac{\gamma}{2^b}\hloss_{i,t}\right)^2\right)
    + \sum_{i \not\in R_t} \frac{p_{i,t}}{1-\gamma}\,\left(1 - \frac{\gamma}{2^b}\hloss_{i,t} + \frac{1}{2}\left(\frac{\gamma}{2^b}\hloss_{i,t}\right)^2\right)\\
& \text{(using $e^{-x} \leq 1-x+x^2/2$ for all $x \ge 0$)}
\\ &\le
    1 - \frac{\gamma/2^b}{1-\gamma}\sum_{i \in V} p_{i,t}\hloss_{i,t}
    + \frac{\gamma^2/2^b}{1-\gamma}\sum_{i \in R_t} \frac{\hloss_{i,t}}{|R_t|}
    + \frac{1}{2}\frac{(\gamma/2^b)^2}{1-\gamma}\sum_{i \in V} p_{i,t}\bigl(\hloss_{i,t}\bigr)^2~.
\end{align*}
%
%
Taking logs, upper bounding, and summing over $t \in \Tb$ yields
\[
    \ln\frac{W_{|\Tb|+1}}{W_1}
\le
    - \frac{\gamma/2^b}{1-\gamma}\sum_{t \in \Tb} \sum_{i \in V} p_{i,t}\hloss_{i,t}
    + \frac{\gamma^2/2^b}{1-\gamma}\sum_{t \in \Tb} \sum_{i \in R_t} \frac{\hloss_{i,t}}{|R_t|}
    + \frac{1}{2}\frac{(\gamma/2^b)^2}{1-\gamma} \sum_{t \in \Tb}\sum_{i \in V} p_{i,t}\bigl(\hloss_{i,t}\bigr)^2~.
\]
Moreover, for any fixed comparison action $k$, we also have
\[
    \ln\frac{W_{|\Tb|+1}}{W_1}
\ge
    \ln\frac{w_{k,|\Tb|+1}}{W_1} = -\frac{\gamma}{2^b}\sum_{t \in \Tb} \hloss_{k,t} - \ln K~.
\]
Putting together, rearranging, and using $1-\gamma \le 1$ gives
\[
    \sum_{t \in \Tb} \sum_{i \in V} p_{i,t}\hloss_{i,t}
\le
    \sum_{t \in \Tb} \hloss_{k,t} + \frac{2^b\ln K}{\gamma}
    + \gamma\sum_{t \in \Tb} \sum_{i \in R_t} \frac{\hloss_{i,t}}{|R_t|}
    + \frac{\gamma}{2^{b+1}} \sum_{t \in \Tb}\sum_{i \in V} p_{i,t}\bigl(\hloss_{i,t}\bigr)^2~.
\]
Reintroducing the notation $\gammab$ and summing over $b=0,1,\dots,\lfloor\log_2 K\rfloor$ gives
\begin{equation}\label{e:eq2}
    \sum_{t=1}^T \left( \sum_{i \in V} p^{(b_t)}_{i,t}\hloss^{(b_t)}_{i,t} - \hloss_{k,t} \right)
\le
    \sum_{b=0}^{\lfloor\log_2 K\rfloor}\frac{2^b\ln K}{\gammab}
    + \sum_{t=1}^T \sum_{i \in R_t} \frac{\gamma^{(b_t)}\hloss^{(b_t)}_{i,t}}{|R_t|}
    + \sum_{t=1}^T \frac{\gamma^{(b_t)}}{2^{b_t+1}} \sum_{i \in V}  p^{(b_t)}_{i,t}\bigl(\hloss^{(b_t)}_{i,t}\bigr)^2~.
\end{equation}
Now, similarly to the proof of Theorem~\ref{thm:noexp}, we have that, for any $i$ and $t$,
$
    \E_t\bigl[\hloss^{(b_t)}_{i,t}\bigr] = \loss_{i,t}
$
and
$
    \E_t\bigl[(\hloss^{(b_t)}_{i,t})^2\bigr] \leq \frac{1}{q^{(b_t)}_{i,t}}~.
$
Hence, taking expectations $\E_t$ on both sides of (\ref{e:eq2}) and recalling the definition of $Q^{(b)}_t$ gives
\begin{equation}\label{e:eq3}
    \sum_{t=1}^T \left( \sum_{i \in V} p^{(b_t)}_{i,t}\ell_{i,t} - \ell_{k,t} \right)
\le
    \sum_{b=0}^{\lfloor\log_2 K\rfloor}\frac{2^b\ln K}{\gammab}
    + \sum_{t=1}^T \sum_{i \in R_t} \frac{\gamma^{(b_t)}\ell_{i,t}}{|R_t|}
    + \sum_{t=1}^T \frac{\gamma^{(b_t)}}{2^{b_t+1}} Q^{(b_t)}_t~.
\end{equation}
Moreover,
\[
\sum_{t=1}^T \sum_{i \in R_t} \frac{\gamma^{(b_t)}\ell_{i,t}}{|R_t|}
\leq
\sum_{t=1}^T \sum_{i \in R_t} \frac{\gamma^{(b_t)}}{|R_t|}
=
\sum_{t=1}^T \gamma^{(b_t)}
=
\sum_{b=0}^{\lfloor\log_2 K\rfloor} \gammab|\Tb|
\]
and
\[
\sum_{t=1}^T \frac{\gamma^{(b_t)}}{2^{b_t+1}} Q^{(b_t)}_t
=
\sum_{b=0}^{\lfloor\log_2 K\rfloor} \frac{\gamma^{(b)}}{2^{b+1}} \sum_{t\in T^{(b)}} Q^{(b)}_t ~.
\]
Hence, plugging back into (\ref{e:eq3}), taking outer expectations on both sides and recalling
that $\Tb$ is random (since the adversary adaptively decides which steps $t$ fall into $\Tb$),
we get
\begin{align}
\nonumber
    \E\bigl[L_{A,T} - L_{k,T}\bigr]
& \le
    \sum_{b=0}^{\lfloor\log_2 K\rfloor}\E\left[\frac{2^b\ln K}{\gammab} + \gammab|\Tb|
    + \frac{\gammab}{2^{b+1}}\sum_{t \in \Tb} Q^{(b)}_t\right]
\\ &=
\label{eq:doubling}
    \sum_{b=0}^{\lfloor\log_2 K\rfloor} \left( \frac{2^b\ln K}{\gammab}
    +  \gammab\E\left[\sum_{t \in \Tb} \left(1 + \frac{Q^{(b)}_t}{2^{b+1}}\right)\right]\right)~.
\end{align}
%
%
This establishes~(\ref{eq:gammabfixed}).

In order to prove inequality~(\ref{eq:doublingtrick}), we need to tune each $\gammab$ separately.
However, a good choice of $\gammab$ depends on the unknown random quantity
\[
    \overline{Q}^{(b)} = \sum_{t \in \Tb} \left(1 + \frac{Q^{(b)}_t}{2^{b+1}}\right)~.
\]
To overcome this problem, we slightly modify Exp3-DOM by applying a doubling trick\footnote
{
The pseudo-code for the variant of Exp3-DOM using such a doubling trick
is not displayed in this extended abstract.
}
to guess $\overline{Q}^{(b)}$ for each $b$. Specifically, for each $b = 0, 1, \ldots, \lfloor \log_2 K \rfloor$,
we use a sequence
$\gammab_r = \sqrt{(2^b\ln K)/2^r}$, for $r=0,1,\dots$. We initially run the algorithm with $\gammab_0$.
Whenever the algorithm is running with $\gammab_r$ and observes that $\sum_s\overline{Q}^{(b)}_s > 2^r$,
where the sum is over all $s$ so far in $\Tb$,\footnote
{
Notice that $\sum_s\overline{Q}^{(b)}_s$ is an observable quantity.
}
then we restart the algorithm with $\gammab_{r+1}$. Because the contribution of instance
$b$ to ~(\ref{eq:doubling}) is
\[
    \frac{2^b\ln K}{\gammab} + \gammab\sum_{t \in \Tb} \left(1 + \frac{Q^{(b)}_t}{2^{b+1}}\right)~,
\]
the regret we pay when using any $\gammab_r$ is at most
$
    2\sqrt{(2^b\ln K) 2^r}
$.
The largest $r$ we need is $\bigl\lceil\log_2\overline{Q}^{(b)}\bigr\rceil$ and
\[
    \sum_{r=0}^{\lceil \log_2\overline{Q}^{(b)}\rceil} 2^{r/2} < 5\sqrt{\overline{Q}^{(b)}}~.
\]
Since we pay regret at most $1$ for each restart, we get
\[
    \E\bigl[L_{A,T} - L_{k,T}\bigr]
\le
    c\,\sum_{b=0}^{\lfloor\log_2 K\rfloor} \E\left[\sqrt{(\ln K)\left(2^b|\Tb| + \frac{1}{2}\sum_{t\in\Tb} Q^{(b)}_t\right)} + \bigl\lceil\log_2\overline{Q}^{(b)}\bigr\rceil\right]~.
\]
for some positive constant $c$.
Taking into account that
\begin{align*}
    \sum_{b=0}^{\lfloor\log_2 K\rfloor} 2^b|\Tb| &\le 2\sum_{t=1}^{T} |R_t|
\\
    \sum_{b=0}^{\lfloor\log_2 K\rfloor} \sum_{t \in \Tb} Q^{(b)}_t &= \sum_{t=1}^T Q_t^{(b_t)}
\\
    \sum_{b=0}^{\lfloor\log_2 K\rfloor} \bigl\lceil\log_2\overline{Q}^{(b)}\bigr\rceil
&= \mathcal{O}\bigl((\ln K)\ln(KT)\bigr)~,
\end{align*}
we obtain
\begin{align*}
    \E\bigl[L_{A,T} - L_{k,T}\bigr]
&\le
    c\,\sum_{b=0}^{\lfloor\log_2 K\rfloor}\E\left[\sqrt{(\ln K)\left(2^b|\Tb| + \frac{1}{2}\sum_{t\in\Tb} Q^{(b)}_t\right)}\right] + \mathcal{O}\bigl((\ln K)\ln(KT)\bigr)
\\ &\le
    c\,\lfloor\log_2 K\rfloor \E\left[\sqrt{\frac{\ln K}{\lfloor\log_2 K\rfloor}\sum_{t=1}^T\left(2|R_t| + \frac{1}{2} Q^{(b_t)}_t\right)}\right] + \mathcal{O}\bigl((\ln K)\ln(KT)\bigr)
\\
&=
{\mathcal O}\left((\ln K)\,\E\left[\sqrt{\sum_{t=1}^T \left(4|R_t|
+ Q_t^{(b_t)}\right)}\right] + (\ln K) \ln(KT)\right)
\end{align*}
%
as desired.
\end{proofof}
\begin{proofof}{Corollary~\ref{c:final}}
We start off from the upper bound~(\ref{eq:doublingtrick}) in the statement of Theorem~\ref{thm:alg}.
We want to bound the quantities $|R_t|$ and $Q_t^{(b_t)}$ occurring therein
at any step $t$ in which a restart does not occur ---the regret for the time steps when a restart occurs
is already accounted for by the term $\mathcal{O}\bigl((\ln K)\ln(KT)\bigr)$ in~(\ref{eq:doublingtrick}).
Now, Lemma~\ref{l:greedycover} gives 
\[
|R_t| = \mathcal{O}\bigl(\alpha(G_t)\ln K\bigr)~.
\]
If $\gamma_t = \gamma^{(b_t)}_t$ for any time $t$ when a restart does not occur,
it is not hard to see that $\gamma_t = \Omega\bigl(\sqrt{(\ln K)/(KT)}\bigr)$.
Moreover, Lemma~\ref{l:weightedamlemma} states that
\[
Q_t = \mathcal{O}\bigl(\alpha(G_t)\ln(K^2/\gamma_t) + |R_t|\bigr) = \mathcal{O}\bigl(\alpha(G_t)\ln(K/\gamma_t)\bigr)~.
\]
Hence, 
\[
Q_t = \mathcal{O}\bigl(\alpha(G_t)\ln(KT)\bigr).
\]
Putting together as in~(\ref{eq:doublingtrick}) gives the desired result.
\end{proofof}

\end{document}